\documentclass{article}

\PassOptionsToPackage{numbers, compress}{natbib}

 \usepackage[preprint]{neurips_2026}

\usepackage[utf8]{inputenc} 
\usepackage[T1]{fontenc}    
\usepackage{hyperref}       
\usepackage{url}            
\usepackage{booktabs}       
\usepackage{amsfonts}       
\usepackage{nicefrac}       
\usepackage{microtype}      
\usepackage{xcolor}         
\usepackage{graphicx}
\usepackage{multirow}
\usepackage{amsmath}
\usepackage{colortbl}
\usepackage{amssymb}
\usepackage{makecell}
\usepackage{tabularx}
\usepackage{adjustbox}
\usepackage{arydshln}
\usepackage{amsthm}
\usepackage{mathtools}
\usepackage{wrapfig}
\usepackage{float}
\usepackage{algorithm}
\usepackage{algpseudocode}
\usepackage{subfig}    

\theoremstyle{plain}
\newtheorem{theorem}{Theorem}[section]
\newtheorem{proposition}[theorem]{Proposition}

\theoremstyle{definition}

\theoremstyle{remark}
\newtheorem{remark}[theorem]{Remark}
\usepackage{pifont}

\newcommand{\std}[1]{\text{\scriptsize\textcolor{gray}{\ensuremath{\pm#1}}}}

\definecolor{ourcolor}{HTML}{99e0eb}
\definecolor{ourblue}{HTML}{27a2c3}

\definecolor{tablecolor}{HTML}{d6e8f6} 

\definecolor{tablecolor2}{HTML}{ffcdb4}
\definecolor{citecolor}{HTML}{fe7b5b}
\definecolor{grey}{rgb}{0.9, 0.9, 0.9}

\definecolor{drift}{HTML}{d6f6ee}
\definecolor{flowmap}{HTML}{d6e8f6}
\definecolor{multistep}{HTML}{ead6f6}

\newcommand{\dd}[2]{#1\std{#2}}
\newcommand{\ddbf}[2]{\cellcolor{tablecolor}\textbf{#1}\std{#2}}

\makeatletter
\def\adl@drawiv#1#2#3{%
        \hskip.5\tabcolsep
        \xleaders#3{#2.5\@tempdimb #1{1}#2.5\@tempdimb}%
                #2\z@ plus1fil minus1fil\relax
        \hskip.5\tabcolsep}
\newcommand{\cdashlinelr}[1]{%
  \noalign{\vskip\aboverulesep
           \global\let\@dashdrawstore\adl@draw
           \global\let\adl@draw\adl@drawiv}
  \cdashline{#1}
  \noalign{\global\let\adl@draw\@dashdrawstore
           \vskip\belowrulesep}}
\makeatother

\title{Implicit Drifting Policy: One-Step Action Generation via Conditional Expert Geometry}

\author{%
  Zemin Yang$^{1}$, Yaoyu He$^{1}$, Yiming Zhong$^{1}$, Yuhao Zhang$^{2}$, Xinge Zhu$^{3}$\\
  \textbf{Yao Mu$^{2}$,} 
  \textbf{Qingqiu Huang$^{4}$,} 
  \textbf{Yuexin Ma$^{1}$\thanks{Corresponding author. \vspace{-10pt}}} \\
  $^{1}$ShanghaiTech University \quad $^{2}$Shanghai Jiao Tong University \\
  $^{3}$The Chinese University of Hong Kong \quad
  $^{4}$Morphi Robot \\
  \texttt{\{csyangzm, mayuexin\}@shanghaitech.edu.cn} \\
  \textbf{Project Page: } \url{https://implicit-drifting-policy.github.io/} \\
  \vspace{-25pt}
}

\begin{document}

\maketitle

\vspace{-5pt}
\begin{abstract}
  \vspace{-5pt}
  Generative action policies based on diffusion or flow matching excel in behavior cloning, 
  yet their iterative sampling is prohibitive for high-frequency robot control. 
  While recent one-step formulations alleviate this latency, 
  they inevitably discard the intermediate trajectory evolution that provides crucial action correction. 
  Directly recovering this mechanism by explicitly estimating a training-time drifting field is mathematically ill-posed due to extreme conditional demonstration sparsity. 
  We introduce \textbf{Implicit Drifting Policy (IDP)}, 
  a one-step imitation learning framework that brings the training-time correction of Drifting into policy learning without explicit vector field estimation. 
  IDP extracts a \textbf{conditional expert geometry} from the local variation of observation-similar expert actions, 
  and compares it against a global \textbf{reference geometry} to isolate condition-specific constraints. 
  This local geometric structure adaptively weights a scalar potential objective. 
  Combined with an expert-proximal terminal evaluation, 
  IDP directly enforces manifold constraints on the one-step generator during training. 
  Extensive evaluations across 2D, 3D, and real-world manipulation tasks show IDP effectively maintains adherence to valid action manifolds, 
  improving upon explicit drifting methods and achieving competitive performance with strong one-step baselines.
\end{abstract}

\vspace{-10pt}

\section{Introduction}

Robot imitation learning maps observations to continuous actions. 
To achieve this, behavior cloning commonly trains a policy with a regression loss to predict an expert action or action sequence from an observation. 
This form is simple and efficient, but it compresses action generation into a single prediction. 
Diffusion Policy introduced diffusion models into this setting by formulating action prediction as conditional denoising, gradually generating actions from noise toward expert actions~\cite{chi2024diffusionpolicyvisuomotorpolicy}. 
Subsequent flow-based policies further describe action generation with continuous flows or flow matching~\cite{zhang2024flowpolicyenablingfastrobust}. 
Compared with direct prediction, diffusion and flow policies retain intermediate states during generation, allowing an action to be formed through multiple updates. 
This intermediate computation is one of the key distinctions between generative policies and regression-based policies.

The inference cost of generative policies has motivated faster action generation methods. 
Consistency Policy and One-Step Diffusion Policy compress diffusion policies through consistency training or distillation~\cite{prasad2024consistencypolicyacceleratedvisuomotor,wang2024onestepdiffusionpolicyfast}; 
Shortcut Models and flow-map methods learn longer-range generative maps to reduce sampling steps~\cite{frans2025stepdiffusionshortcutmodels,li2026longshortflowmapperspectivedrifting}. 
Beyond sampling speed, a more basic question is what multi-step generation contributes to control. 
Much Ado About Noising analyzes generative robotic control from this perspective and argues that its benefits are closely tied to stochasticity injection and supervised intermediate computation, rather than multimodality alone~\cite{pan2026adonoisingdispellingmyths}. 
This raises a question for one-step policies: \textit{when inference no longer contains multi-step updates, how can training still provide a useful action-correction signal?}

\begin{figure}[t]
\centering
\includegraphics[width=0.85\linewidth]{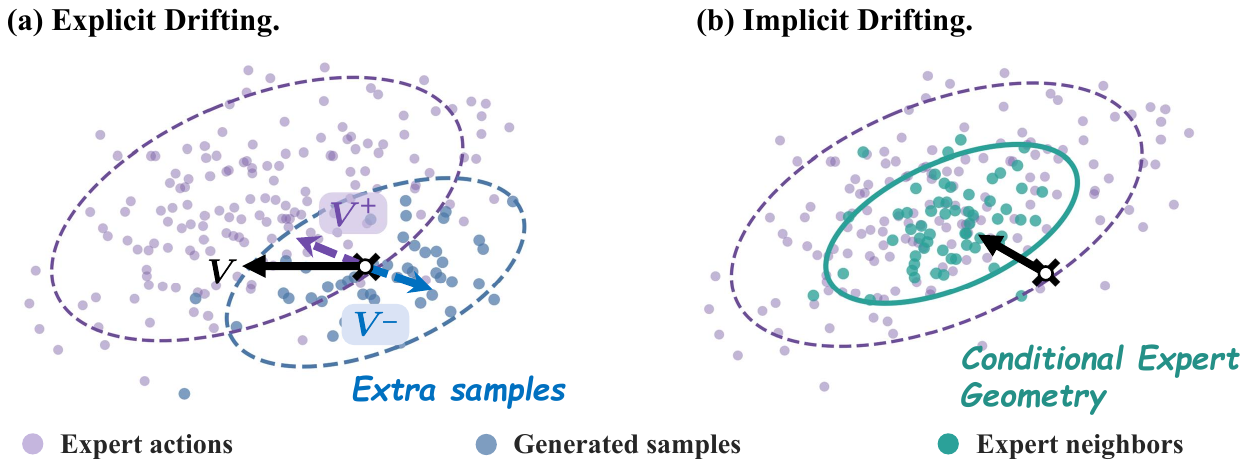}
\caption{%
\textbf{(a) Explicit Drifting} constructs targets by estimating a drifting field between expert actions (purple) and policy predictions (blue). Relying on the evolving policy state and mini-batch sampling makes this target sensitive and unstable. 
\textbf{(b) Implicit Drifting Policy (Ours)} directly extracts a Conditional Expert Geometry (green) from expert actions under similar observations. 
}
\vspace{-2ex}
\label{fig:teaser}
\end{figure}

Drifting models study a related mechanism in generative modeling. 
Drifting trains a single-pass generator and uses a drifting field during training to move the current model distribution toward the data distribution~\cite{deng2026generativemodelingdrifting}. 
Recent analyses connect this field to score-based transport directions and terminal data-grounded correction~\cite{lai2026unifiedviewdriftingscorebased,li2026longshortflowmapperspectivedrifting}. 
Here, correction affects the learning dynamics while inference remains a single generator evaluation. 
This makes Drifting relevant to one-step action policies: the training objective itself can carry part of the action-correction role, reducing reliance on additional sampling steps at deployment. 
However, behavior cloning provides a different form of supervision. 
For a given observation, demonstrations usually provide a single expert action. 
Estimating a drifting field around the policy output thus requires batch neighbors and the prediction location to approximate local conditional structure. 
Such estimates vary with the mini-batch and evolving policy state, making the target sensitive to implementation details (Fig~\ref{fig:teaser} (a)).

We argue that a reliable correction signal should instead emerge from the local structure already present in demonstrations. 
A single observation provides one expert action, while expert actions under similar observations provide local references. 
Similar observations often correspond to nearby task states or control contexts; 
within these local references, variation among expert actions indicates which action directions are more consistent near the current condition and which directions allow more variation. 
We use this demonstration-supported information to build a training signal that emphasizes errors along locally important action directions, 
without estimating a drifting field as a separate supervision target. 
This keeps the training-time correction view of Drifting while grounding the correction source in local action structure observable in imitation learning.

Motivated by this, we propose \textbf{Implicit Drifting Policy} (IDP). 
IDP formulates an implicit correction objective that minimizes a geometry-aware potential. 
The core of this potential lies in its adaptive directional weights, which we derive by extracting a \textbf{conditional expert geometry} from the local variation of expert actions under similar observations (Fig~\ref{fig:teaser} (b)). 
To isolate condition-specific constraints, we further compare this local geometry with a global \textbf{reference geometry}, ensuring that inherently low-variance dimensions are not mistakenly penalized. 
Finally, to ensure the network effectively captures the local manifold topology defined by this potential, IDP introduces an expert-proximal evaluation during training, exposing the one-step generator to the local correction structure near expert actions. 
Across 2D, 3D imitation tasks, and real-world robot experiments, IDP improves over explicit drifting variants and remains competitive with strong one-step policy baselines.
In summary, our contributions are as follows:
\begin{itemize}
    \item We introduce \textbf{Implicit Drifting Policy (IDP)}, a one-step imitation learning framework. By minimizing a geometry-aware potential and introducing an expert-proximal training evaluation, IDP brings the training-time correction of drifting into policy learning.
    \item We propose a method to extract local action correction structures by comparing the \textbf{conditional expert geometry} under similar observations against a global \textbf{reference geometry}. This adaptively isolates condition-specific constraints to weight the correction potential.
    \item We evaluate IDP across 2D, 3D, and real-world manipulation tasks. Results demonstrate that IDP outperforms explicit drifting variants and exhibits stronger adherence to local action constraints over strong one-step baselines.
\end{itemize}

\section{Related Work}

\paragraph{Generative action policies and one-step acceleration.}
Continuous-action imitation learning commonly uses behavior cloning, where a policy predicts expert actions from observations.
Diffusion Policy formulates action prediction as conditional denoising, so that actions are formed through multiple denoising steps~\cite{chi2024diffusionpolicyvisuomotorpolicy}.
Flow matching provides a continuous-time transport formulation for generative modeling~\cite{lipman2023flowmatchinggenerativemodeling}, and has been used for policies with point-cloud inputs, spatially structured action representations, 3D manipulation, and action-to-action generation~\cite{chisari2024learningroboticmanipulationpolicies,funk2024actionflowequivariantaccurateefficient,zhang2024flowpolicyenablingfastrobust,jia2026actiontoactionflowmatching}.
Recent work reduces the inference cost of generative policies from several directions:
Consistency Policy~\cite{prasad2024consistencypolicyacceleratedvisuomotor} and ManiCM~\cite{lu2025manicmrealtime3ddiffusion} use consistency models to accelerate diffusion policies;
One-Step Diffusion Policy~\cite{wang2024onestepdiffusionpolicyfast}, One-Step Flow Policy~\cite{li2026onestepflowpolicyselfdistillation}, and MP1~\cite{sheng2025mp1meanflowtamespolicy} obtain single-step policies through distillation, self-distillation, or average-velocity learning;
Streaming Diffusion Policy~\cite{hoeg2024streamingdiffusionpolicyfast}, Falcon~\cite{chen2025falconfastvisuomotorpolicies}, and Real-Time Iteration Scheme~\cite{duan2025realtimeiterationschemediffusion} reduce, reorganize, or reuse the denoising process to lower latency.
In broader generative modeling, Shortcut Models~\cite{frans2025stepdiffusionshortcutmodels}, Mean Flows~\cite{geng2025meanflowsonestepgenerative}, flow-map distillation~\cite{boffi2025buildconsistencymodellearning}, and Align Your Flow~\cite{sabour2025alignflowscalingcontinuoustime} study long-interval or one-step generative mappings.
Much Ado About Noising~\cite{pan2026adonoisingdispellingmyths} moves the discussion from sampling steps to mechanism, arguing that the benefit of generative control policies is closely tied to stochasticity injection and supervised intermediate computation.
Our work follows this concern for one-step policy training and asks how training can retain an action-correction signal once deployment uses a single policy evaluation.

\paragraph{Drifting models and training-time correction.}
Drifting models study one-step generative modeling from the perspective of training-time distribution evolution, where a drifting field describes the direction that moves the current model distribution toward the data distribution.
Generative Modeling via Drifting~\cite{deng2026generativemodelingdrifting} trains a single-pass generator and uses this drifting field during training.
Subsequent analysis relates the drifting field to score-based transport directions~\cite{lai2026unifiedviewdriftingscorebased}.
Long-Short Flow-Map~\cite{li2026longshortflowmapperspectivedrifting} further interprets data-grounded terminal correction as supervision for a long map.
These works provide a view of training-time correction for one-step generation, primarily in distribution-level generative modeling.
Behavior cloning provides a more specific form of supervision. Since demonstrations usually provide only a single deterministic expert action for a given observation, estimating a continuous drifting field around the current policy output is forced to depend on local mini-batch neighborhoods and the evolving prediction location.
We build on the training-time correction view of Drifting while grounding the correction signal in local information available in imitation learning.

\paragraph{Geometric structure in imitation learning.}
Imitation learning has long used demonstrations to represent action distributions, trajectory structure, and variability.
Gaussian mixture regression, dynamic movement primitives, probabilistic movement primitives, kernelized movement primitives, and stable dynamical systems learn local means, covariances, temporal evolution, or stable control fields from demonstration trajectories~\cite{calinon2007learningrepresentinggeneralizing,ijspeert2013dynamicalmovementprimitives,paraschos2013probabilisticmovementprimitives,huang2018kernelizedmovementprimitives,khansarizadeh2011learningstabledynamicalsystems}.
More recently, Geometry-aware Policy Imitation treats demonstrations as geometric curves and derives distance fields that induce attraction and progression flows for policy synthesis~\cite{li2025geometryawarepolicyimitation}.
Another line of deep imitation learning focuses on multimodal continuous actions.
Implicit Behavioral Cloning uses energy-based objectives, and VQ-BeT uses latent action tokens for behavior generation~\cite{florence2021implicitbehavioralcloning,lee2024behaviorgenerationlatentactions}.
These works show that imitation data contains structure beyond a single regression target.
Our use of geometry focuses on the local variation of expert actions under similar observations as a training-time correction signal for one-step policies.
Compared with distance-field policy synthesis or action-distribution modeling, our objective uses local geometry during training while preserving single-evaluation deployment.

\section{Preliminaries}

\paragraph{From Multi-Step to One-Step Action Generation.}
In conditional imitation learning, given a dataset of expert demonstrations \(\mathcal{D} = \{(o_i, a^*_i)\}_{i=1}^N\), 
the objective is to learn a policy mapping an observation \(o \in \mathcal{O}\) to a expert action chunk \(a^*\) 
from the target data distribution \(p_{\text{data}}(a^* | o)\). 
Traditional behavior cloning (BC) frames this as a single-step direct mapping \(a = f_\theta(o)\), 
optimizing the isotropic mean squared error \(\mathbb{E}_{o, a^* \sim p_{\text{data}}} \|f_\theta(o) - a^*\|_2^2\). 
While deployment is highly efficient, this objective reduces generation to an isolated endpoint prediction, 
failing to exploit the geometric structure of the expert action manifold.
Generative action policies address this by constructing a continuous trajectory over a virtual time \(t \in [0, 1]\), 
where \(a_t\) denotes the intermediate state. Starting from an initial source sample \(a_0 \sim p_0\), 
diffusion models progressively reach the final action via a denoising path, 
typically optimized using the denoising loss \(\mathcal{L}_{\text{DP}} = \mathbb{E}[\|\epsilon_\theta(a_t, o, t) - \epsilon\|_2^2]\). 
Evolving from this, flow matching directly formulates the trajectory via an ordinary differential equation (ODE) 
\(da_t = v_\theta(a_t, o, t) dt\) driven by a velocity field, 
optimizing \(\mathcal{L}_{\text{FM}} = \mathbb{E}[\|v_\theta(a_t, o, t) - \frac{d}{dt}a_t\|_2^2]\). 
This multi-step evolution naturally retains the capacity for progressive error correction.

To reduce inference latency, recent acceleration algorithms compress the intermediate path to directly learn a one-step endpoint mapping or a large-stride velocity. 
For instance, Consistency Models~\cite{prasad2024consistencypolicyacceleratedvisuomotor} learn a mapping \(f_\theta(o, a_t, t)\) by imposing a cross-time self-consistency constraint:
\(\mathcal{L}_{\text{CM}} = \mathbb{E}\left[ \left\| f_\theta(o, a_t, t) - f_{\phi}(o, a_{t'}, t') \right\|_2^2 \right]\),
enforcing any intermediate point on the trajectory to map to the same endpoint (where \(\phi\) is the target network parameter). 
Similarly, MeanFlow~\cite{geng2025meanflowsonestepgenerative} learns the average velocity \(u_\theta(a_t, o, r, t)\) over a time interval \([r, t]\) by satisfying an intrinsic identity between average and instantaneous velocities:
\(\mathcal{L}_{\text{MF}} = \mathbb{E}\left[ \left\| u_\theta(a_t, o, r, t) - \operatorname{sg}\left[ v_t - (t-r)\frac{d}{dt}u_\theta(a_t, o, r, t) \right] \right\|_2^2 \right]\),
where \(\operatorname{sg}[\cdot]\) denotes the stop-gradient operator.
However, whether relying on target-network distillation or higher-order derivative identities, the optimization processes encounter specific challenges. 
When the physical intermediate evolution is bypassed, 
reconstructing the corrective signal for the one-step mapping without relying on expensive cross-step constraints remains a crucial problem.

\paragraph{Explicit Drifting in Behavior Cloning.}
Drifting models offer a mathematical framework to isolate the correction mechanism entirely within the training phase. 
Given the target data distribution \(p_{\text{data}}\) and generated distribution \(q_\theta\), 
Drifting constructs an explicit continuous vector field \(V_{p,q}(a) = V_p^+(a) - V_q^-(a)\) comprising attraction and repulsion. 
This field dictates the optimal displacement for a generated sample \(a\) toward the data manifold. Based on a kernel function \(k(\cdot, \cdot)\), 
the attraction and repulsion fields are respectively defined as:
\begin{equation}
    V_p^+(a) = \frac{\mathbb{E}_{y\sim p_{\text{data}}}[k(a, y)(y - a)]}{\mathbb{E}_{y\sim p_{\text{data}}}[k(a, y)]}, \qquad V_q^-(a) = \frac{\mathbb{E}_{y\sim q_\theta}[k(a, y)(y - a)]}{\mathbb{E}_{y\sim q_\theta}[k(a, y)]}.
\end{equation}
The one-step generator \(f_\theta\) is trained to regress this field-displaced frozen target:
\begin{equation}
    \mathcal{L}_{\mathrm{Drift}}(\theta) = \mathbb{E}_{a_0 \sim p_0} \left\| f_\theta(o, a_0, 0) - \operatorname{sg}\left[ f_\theta(o, a_0, 0) + V_{p,q}(f_\theta(o, a_0, 0)) \right] \right\|_2^2.
\end{equation}
In unconditional generation, the rich sample sets supporting the distributions enable the vector field to provide smooth manifold guidance.
However, directly porting this explicit drifting mechanism to conditional behavior cloning encounters a fundamental structural degeneration. 
Consider the following proposition:

\begin{proposition}[Degeneration of Explicit Drifting in Behavior Cloning]
    \label{prop:degeneration_explicit_drifting}
    
    Assume that in behavior cloning, the dataset provides only a unique, 
    deterministic expert action \(a_i^*\) for a specific observation \(o_i\), 
    and the generator produces a single prediction \(a = f_\theta(o_i, a_0, 0)\) per training iteration. 
    The empirical conditional distributions degenerate into Dirac measures \(\hat{p}_{\text{data}}(y | o_i) = \delta(y - a_i^*)\) 
    and \(\hat{q}_\theta(y | o_i) = \delta(y - a)\). For any translation-invariant positive kernel \(k(\cdot, \cdot)\), 
    the empirical conditional drifting field mathematically evaluates exactly to \(V_{\hat{p},\hat{q}}(a | o_i) = a_i^* - a\). 
    Consequently, the field-displaced regression objective becomes exactly equivalent to the isotropic Mean Squared Error, 
    losing all local geometric guidance. (Proof. See Appendix~\ref{app:proof_prop1}.)
\end{proposition}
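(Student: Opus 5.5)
The plan is a direct substitution of the two Dirac measures into the kernel-weighted averages defining \(V_p^+\) and \(V_q^-\), followed by the observation that the stop-gradient regression then has the same gradient as a plain MSE. I will take the kernel to be translation-invariant, \(k(a,y)=\kappa(a-y)\), and positive, so \(\kappa>0\) everywhere. The case of interest, \(k(a,a)=\kappa(0)\), is then strictly positive. This is precisely what makes every normalizing denominator below nonzero, so all ratios are well defined.

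First, the attraction term. Under \(\hat p_{\text{data}}(\cdot\mid o_i)=\delta(\cdot-a_i^*)\), the numerator of \(V_p^+(a)\) equals \(k(a,a_i^*)(a_i^*-a)\) and the denominator equals \(k(a,a_i^*)>0\). Hence \(V_{\hat p}^+(a\mid o_i)=a_i^*-a\): the kernel weight cancels exactly, whatever its value. Second, the repulsion term. Under \(\hat q_\theta(\cdot\mid o_i)=\delta(\cdot-a)\), the numerator is \(k(a,a)(a-a)=0\) and the denominator is \(\kappa(0)>0\), so \(V_{\hat q}^-(a\mid o_i)=0\). Subtracting gives \(V_{\hat p,\hat q}(a\mid o_i)=a_i^*-a\).

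Substituting into \(\mathcal{L}_{\mathrm{Drift}}\), the frozen target becomes \(\operatorname{sg}[a+(a_i^*-a)]=\operatorname{sg}[a_i^*]=a_i^*\), which is a constant independent of \(\theta\). The per-sample loss is therefore \(\|f_\theta(o_i,a_0,0)-a_i^*\|_2^2\). Averaged over the data, this is the isotropic MSE of behavior cloning from the Preliminaries, with the noise input \(a_0\) playing no structural role. I would also remark that the identity holds pointwise in \(\theta\), so the values and gradients of the two objectives coincide and the training dynamics are identical. The kernel's shape, and so any bandwidth or locality information, has dropped out entirely. This is the precise sense in which all local geometric guidance is lost.

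I do not expect a genuine obstacle. The one point needing care is interpreting the stop-gradient: I must argue that the displaced target collapses to the data constant \(a_i^*\) before differentiation. Without this, one might worry that gradients flow through \(a\) inside the target. Since the target simplifies algebraically to \(a_i^*\), no \(\theta\)-dependence remains, and equivalence holds both in value and in gradient.
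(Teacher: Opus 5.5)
Your proposal is correct and follows essentially the same route as the paper's proof. Both apply the sifting property of the two Dirac measures, cancel the positive kernel in the attraction term, use $k(a,a)=k(0,0)>0$ to make the repulsion term vanish, and note that the stop-gradient target $\operatorname{sg}[a+(a_i^*-a)]$ is the constant $a_i^*$, which leaves exactly the isotropic MSE. Your explicit remark that values and gradients coincide pointwise in $\theta$ only sharpens the paper's final step and does not change the argument.
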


While the underlying control task may tolerate variations around \(a_i^*\) 
(e.g., alternative valid trajectories or recoverable states), 
the preceding analysis highlights that the empirical target distribution in standard BC training is too sparse to directly support the estimation of a continuous drifting field. 
To circumvent this degeneration in practice, models must resort to empirical approximations, 
such as synthesizing a localized empirical distribution by drawing multiple independent noise samples per observation during training and applying self-masking heuristics. 
While successful in avoiding the collapse to MSE, 
these approximations significantly increase computational overhead and entangle the correction field with transient sampling densities. 
This practical challenge motivates us to explore an alternative path: 
rather than regressing an explicit dynamic vector field, 
we extract the static local geometric structure directly from demonstrations.

\section{Implicit Drifting Policy}

Given the challenges of estimating explicit continuous vector fields under sparse empirical distributions, 
we explore an alternative path. Our core idea is to shift away from regressing dynamic drifting fields, 
and instead consider extracting the local geometric structure directly from static expert demonstrations, 
converting it into a potential objective that induces correction (see Fig~\ref{fig:pipeline}).

\begin{figure}[t]
\centering

\includegraphics[width=\linewidth]{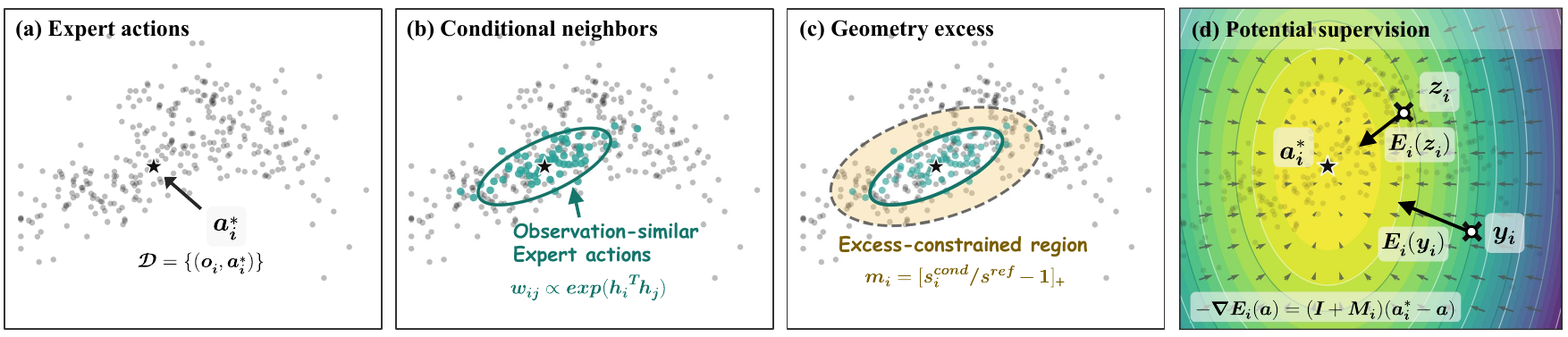}
\caption{
\textbf{Overview of Implicit Drifting Policy.}
(a) Expert action $a_i^*$ anchors the local action space.
(b) Weights $w_{ij}$ select observation-similar expert actions to form a Conditional Expert Geometry $G_i$ around $a_i^*$.
(c) Comparing $G_i$ with reference geometry $\Sigma_{\mathrm{ref}}$ yields the Local Geometry Excess $m_i$ (coordinate-wise precision excess).
(d) The induced metric $M_i$ defines a potential $E_i$, where $-\nabla E_i(a)$ supervises both deployment proposal $y_i$ and expert-proximal evaluation $z_i$.
}
\vspace{-2ex}
\label{fig:pipeline}
\end{figure}

\subsection{Implicit Correction Objective and Local-Aware Evaluation}
\label{sec:4.1}
To avoid regressing a high-dimensional and unstable continuous drifting field, 
we consider constructing an implicit correction field directed toward the expert target \(a_i^*\). 
In the most basic case, a simple attraction field can be written as \(V_{\mathrm{base}}(a) = a_i^* - a\). 
However, this simple attraction is isotropic. In multi-step generative models, 
isotropic targets, like MSE loss, are sufficient because the model can implicitly adjust and align with the action manifold through hundreds of inference iterations along the intermediate trajectory. 
However, for a one-step generator, it loses the ability to progressively correct errors along the intermediate path. 
If the training objective does not explicitly reinforce the tightness of constraints across different action dimensions, 
the prediction is highly prone to diverging from the valid manifold structure in a single stride.
Therefore, for a given predicted action \(a\), we consider applying a geometry-aware correction vector:
\begin{equation}
    \Delta_i(a) = M_i(a_i^* - a),
\end{equation}
where \(M_i \succeq 0\) is a positive semi-definite symmetric matrix acting as geometry-adaptive weights, 
determining the strength of the correction force along different directions. 
We elucidate the mathematical properties of this vector through the following proposition.

\begin{proposition}[Geometry-Induced Correction Potential]
    \label{prop:geometry_induced_potential}
    For any constant positive semi-definite symmetric matrix \(M_i \succeq 0\), the geometry-induced correction field defined by \(\Delta_i^{\text{geo}}(a) = M_i(a_i^* - a)\) is a conservative field. Mathematically, it corresponds exactly to the negative gradient (\(-\nabla_a E_i^{\text{geo}}(a) = \Delta_i^{\text{geo}}(a)\)) of the following geometry-only potential function:
    (Proof. See Appendix~\ref{app:proof_prop2}.)
    \begin{equation}
        E_i^{\text{geo}}(a) = \frac{1}{2} (a - a_i^*)^\top M_i (a - a_i^*).
    \end{equation}
\end{proposition}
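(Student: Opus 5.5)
The plan is to compute the gradient of the quadratic potential \(E_i^{\text{geo}}\) directly and then read off conservativity from the fact that the field is a gradient on all of \(\mathbb{R}^d\).

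\textbf{Gradient computation.} Let \(d\) denote the action-chunk dimension and set \(u = a - a_i^*\), so that \(E_i^{\text{geo}}(a) = \tfrac12 u^\top M_i u\). Since \(M_i\) is constant, expand \(E_i^{\text{geo}}(a+h)\) for a perturbation \(h\):
\begin{equation}
E_i^{\text{geo}}(a+h) = E_i^{\text{geo}}(a) + \tfrac12\left(h^\top M_i u + u^\top M_i h\right) + \tfrac12 h^\top M_i h .
\end{equation}
By symmetry \(M_i = M_i^\top\), the linear term equals \(u^\top M_i h\), and the remainder is \(O(\|h\|^2)\). Hence
\begin{equation}
\nabla_a E_i^{\text{geo}}(a) = M_i (a - a_i^*), \qquad -\nabla_a E_i^{\text{geo}}(a) = M_i(a_i^* - a) = \Delta_i^{\text{geo}}(a).
\end{equation}
Without symmetry one would instead obtain \(\tfrac12(M_i + M_i^\top)u\), so symmetry is exactly the hypothesis needed here.

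\textbf{Conservativity.} A field that is the negative gradient of a \(C^1\) scalar function on all of \(\mathbb{R}^d\) is conservative by definition. Its line integral along any path from \(a\) to \(b\) equals \(E_i^{\text{geo}}(a) - E_i^{\text{geo}}(b)\) by the fundamental theorem of calculus for line integrals, so it is path-independent. As a cross-check, the Jacobian of \(\Delta_i^{\text{geo}}\) is \(-M_i\), which is symmetric; this is the curl-free condition, and since \(\mathbb{R}^d\) is simply connected the two characterizations agree.

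\textbf{Role of the hypotheses and main obstacle.} Positive semi-definiteness is not needed for the identity itself. It only guarantees that \(E_i^{\text{geo}} \ge 0\) with minimum value \(0\) attained at \(a_i^*\), so I would add this as a remark interpreting the potential as a correction toward the expert action. There is no genuine obstacle in this proof. The only point requiring care is to state explicitly that \(M_i\) is independent of \(a\): it is built from demonstrations only, as the statement requires. Otherwise, extra terms involving derivatives of \(M_i\) would appear in the gradient and the identity would fail.
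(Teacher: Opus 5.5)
Your proposal is correct and follows essentially the same route as the paper: compute $\nabla_a E_i^{\text{geo}}(a) = M_i(a-a_i^*)$ using the symmetry of $M_i$, then conclude conservativity because the field is the gradient of a smooth potential on the simply connected $\mathbb{R}^{d_a}$, so its line integrals are path-independent. The only difference is that the paper differentiates index by index, whereas you use a first-order expansion. Your remarks that positive semi-definiteness is needed only for $E_i^{\text{geo}} \ge 0$, and that $M_i$ must not depend on $a$, are accurate additions.
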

The complete geometric potential objective incorporates the isotropic error of behavior cloning:
\begin{equation}
    E_i(a) = \frac{1}{2} \| a - a_i^* \|_2^2 + E_i^{\text{geo}}(a) = \frac{1}{2}[\underbrace{ \| a - a_i^* \|_2^2}_{\text{isotropic error}} + \underbrace{(a - a_i^*)^\top M_i (a - a_i^*)}_{\text{geometry-adaptive penalty}}]
\end{equation}
whose full negative gradient naturally yields a combined correction force: \(-\nabla_a E_i(a) = (I + M_i)(a_i^* - a)\) (Fig~\ref{fig:pipeline} (d)).
The first term of this potential is equivalent to the isotropic error of behavior cloning, 
while the second term adaptively increases the penalty for constrained directions based on \(M_i\). 
We transform the regression of an explicit vector field into the minimization of a potential. 

However, the constraint characterized by matrix \(M_i\) is statistically grounded primarily within the local vicinity of the expert action \(a_i^*\). 
If the optimization process applies this potential constraint exclusively at the standard initial prediction point \(y_i = f_\theta(o_i, a_0, 0)\), 
the network might struggle to adequately learn the true topology of the local manifold. To ensure that this local potential effectively guides network learning, 
we consider introducing an expert-proximal evaluation anchor during training. 
By selecting a time scalar \(t_* \in (0, 1)\) close to \(1\), 
we construct a probing input \(\tilde{a}_i = a_i^* + (1 - t_*)\epsilon_i\) (where \(\epsilon_i \sim \mathcal{N}(0, I)\)) by injecting exploratory noise proportional to the distance around the expert action. 
We evaluate the mapping at this proximal location, yielding \(z_i = f_\theta(o_i, \tilde{a}_i, t_*)\). 
Therefore, the single-step prediction and the local probing are controlled by the same geometric potential objective, 
formulating our total training objective as:
\begin{equation}
    \mathcal{L} = \mathbb{E}_{i \sim \mathcal{D}} \left[ E_i(y_i) + \lambda_{\mathrm{prox}}\, E_i(z_i) \right],
    \label{eq:idp_total_loss}
\end{equation}
where \(\lambda_{\mathrm{prox}}\) is a hyperparameter balancing the standard endpoint regression and the expert-proximal evaluation. We adopt the implementation choice of MIP~\cite{pan2026adonoisingdispellingmyths} for this weight; details are deferred to Appendix~\ref{app:hyper}.
This construction establishes the skeleton of the entire implicit correction framework. 
The core challenge then becomes how to extract the adaptive constraint matrix \(M_i\) from static demonstration data to reflect the current control state.

\subsection{Conditional Expert Geometry as the Correction Source}

To ensure the adaptive penalty effectively limits invalid actions, 
we consider grounding the construction of matrix \(M_i\) in the authentic structure of the demonstration data. 
In control manifolds, similar observation states usually correspond to analogous robot poses or environmental layouts, 
which physically often impose similar operational bottlenecks and obstacle avoidance constraints. 
Therefore, under similar observation conditions, 
the variation in expert actions naturally reflects the specific motion constraints and acceptable tolerances of the current task.

To extract this structure, we need to identify and aggregate similar control contexts in the feature space. 
However, because feature representations dynamically update during training, 
to ensure that the similarity metric relies on stable features, we consider applying a stop-gradient operation and normalization to the embeddings before computation:
\begin{equation}
    h_i = \operatorname{sg}[\phi_\theta(o_i)]  \Big/ \,    \big(\|\operatorname{sg}[\phi_\theta(o_i)]\|_2 + \varepsilon \big).
\end{equation}
To evaluate the correlation between different states, 
we first compute the pairwise feature inner products \(s_{ij} = h_i^\top h_j\). 
Considering that the observation density across different regions of the state space in the demonstration dataset is often highly imbalanced
 (e.g., abundant samples in routine states but very few in edge cases), 
directly using inner products can lead to local weight collapse in sparse regions. 
To mitigate this issue, we adopt row-normalization \(\bar{s}_{ij} = \frac{s_{ij} - \mu_i}{\sigma_i + \varepsilon}\), 
and obtain the local neighborhood weights \(w_{ij} \propto \exp(\bar{s}_{ij})\) via Softmax.
Based on these weights, 
to quantify the degree of action divergence under the current observation, 
we define the Conditional Expert Geometry matrix within the local space of the expert action \(a_i^*\) (Fig~\ref{fig:pipeline} (b)):
\begin{equation}
    G_i = \mathbb{E}_{j \sim \mathcal{D} \setminus \{i\}} \left[ w_{ij} (a_j^* - a_i^*)(a_j^* - a_i^*)^\top \right].
    \label{eq:cond_expert_geometry}
\end{equation}
Note that the query sample itself (\(j=i\)) is explicitly excluded to prevent the zero-distance term from artificially collapsing the local variance estimate.
From a nonparametric statistical perspective, 
this construction can be interpreted through kernel smoothing, 
serving as an estimate of the local conditional covariance \(\Sigma_{a|o}\). 
Directions in \(G_i\) with small variance indicate strict operational constraints, 
whereas directions with large variance represent redundant dimensions of the action.

\subsection{Local Geometry Excess}

Directly utilizing the inverse of \(G_i\) as \(M_i\) could introduce biases. 
This is because certain action coordinates may naturally maintain extremely low variance globally across the entire task. 
We consider it necessary to distinguish the condition-specific constraints triggered purely by the current observation from this inherent global action scale.

In a general multivariate form, distinguishing the condition-specific constraints from the global marginal prior typically requires comparing their inverse covariance matrices. Let \(P_i^{\mathrm{cond}} = (G_i + \varepsilon I)^{-1}\) and \(P^{\mathrm{ref}} = (\Sigma_{\text{ref}} + \varepsilon I)^{-1}\). The geometry excess can be conceptualized via a non-negative truncated comparison, such as \(M_i \propto [(P^{\mathrm{ref}})^{-1/2} P_i^{\mathrm{cond}} (P^{\mathrm{ref}})^{-1/2} - I]_+\), which usually involves expensive generalized eigenvalue decomposition operations. Considering computational efficiency and to ensure relatively independent penalties across dimensions, we adopt a diagonal instantiation in practice. 
Specifically, we compute the diagonal conditional variance \(v_{i,d}^{\mathrm{cond}} = \mathbb{E}_{j \sim \mathcal{D}}[w_{ij} (a_{j,d}^* - a_{i,d}^*)^2]\) 
and the global reference variance \(v_d^{\mathrm{ref}} = \mathbb{E}_{i \sim \mathcal{D}} [(a_{i,d}^* - \bar{a}_d^*)^2]\). 
To eliminate dimensional impacts and compare the tightness of different dimensions on the same relative baseline, 
we convert them into normalized precision scales (i.e., standardized inverse variance):
\begin{equation}
    s_{i,d}^{\mathrm{cond}} = \big( v_{i,d}^{\mathrm{cond}} + \varepsilon \big)^{-1/2} \Big/ \, \mathbb{E}_{k}\big[ (v_{i,k}^{\mathrm{cond}} + \varepsilon)^{-1/2} \big], 
    \quad
    s_d^{\mathrm{ref}} = \big( v_d^{\mathrm{ref}} + \varepsilon \big)^{-1/2} \Big/ \, \mathbb{E}_{k}\big[ (v_k^{\mathrm{ref}} + \varepsilon)^{-1/2} \big].
\end{equation}
Since we only wish to penalize dimensions that become more strictly constrained due to the current observation than they usually are, 
we consider utilizing the filtering property of the one-sided ReLU operator to define the Geometry Excess (Fig~\ref{fig:pipeline} (c)):
\begin{equation}
    m_{i,d}=\operatorname{ReLU}\left(s_{i,d}^{\mathrm{cond}} \Big/ \, (s_d^{\mathrm{ref}}+\varepsilon)-1\right),
    \qquad
    M_i=\operatorname{Diag}(m_i).
\end{equation}
The ReLU operator ensures that a dimension is extracted only when 
the local precision exceeds the global marginal precision prior. 
Consequently, we filter out inherent features that do not surpass the baseline, 
extract the specific geometry purely associated with the current state, 
and use it as the adaptive weight matrix \(M_i\) for the potential in Section~\ref{sec:4.1}.

\section{Experiments}


We design our experiments to examine four questions that follow the construction of IDP:

\begin{itemize}
    \item \textbf{H1: Performance.}
    Can IDP improve one-step action generation across simulated and real manipulation tasks, while narrowing the performance gap to multi-step generative policies?

    \item \textbf{H2: Implicit Correction and Local-Aware Evaluation.}
    Can IDP internalize the local manifold structure by being evaluated near expert actions during training?

    \item \textbf{H3: Conditional Expert Geometry.}
    Do expert actions under similar observations provide a useful local covariance structure for guiding action correction?

    \item \textbf{H4: Local Geometry Excess.}
    Does comparing conditional geometry with reference geometry isolate condition-specific constraints from marginal action variation?
\end{itemize}

\begin{table}[t]
    \centering
    \caption{\textbf{Performance on 3D pointcloud manipulation.} We assess performance on 56 challenging tasks with 3 random seeds.}
    \label{tab:3d_summary}
    \setlength{\tabcolsep}{4.5pt}
    \resizebox{\textwidth}{!}{%
    \begin{tabular}{l c ccc cccc cccc c}
        \toprule
        \multirow{2.5}{*}{\textbf{Methods}} & \multirow{2.5}{*}{\textbf{NFE}} & \multicolumn{3}{c}{\textbf{Adroit}} & \multicolumn{4}{c}{\textbf{DexArt}} & \multicolumn{4}{c}{\textbf{MetaWorld}} & \multirow{2.5}{*}{\textbf{Avg.}} \\
        \cmidrule(lr){3-5} \cmidrule(lr){6-9} \cmidrule(lr){10-13}
        & & Hammer & Door & Pen & Laptop & Faucet & Bucket & Toilet & Easy (28) & Medium (11) & Hard (5) & Very Hard (5) & \\
        \midrule
        \multicolumn{14}{l}{\textbf{\textit{Multi-Step:}}} \\
        \textcolor{gray}{DP3$\dagger$} & \textcolor{gray}{10} & \textcolor{gray}{\textbf{100}}\std{0} & \textcolor{gray}{62}\std{4} & \textcolor{gray}{43}\std{6} & \textcolor{gray}{83}\std{1} & \textcolor{gray}{63}\std{2} & \textcolor{gray}{46}\std{2} & \textcolor{gray}{82}\std{4} & \textcolor{gray}{90.9}\std{1.4} & \textcolor{gray}{61.6}\std{6.5} & \textcolor{gray}{38.0}\std{4.2} & \textcolor{gray}{49.0}\std{6.8} & \textcolor{gray}{73.9}\std{3.3} \\
        \textcolor{gray}{Flow} & \textcolor{gray}{10} & \textcolor{gray}{\textbf{100}}\std{0} & \textcolor{gray}{72}\std{3} & \textcolor{gray}{69}\std{3} & \textcolor{gray}{\textbf{92}}\std{3} & \textcolor{gray}{37}\std{5} & \textcolor{gray}{34}\std{5} & \textcolor{gray}{80}\std{7} & \textcolor{gray}{66.7}\std{7.7} & \textcolor{gray}{41.3}\std{7.4} & \textcolor{gray}{40.3}\std{7.2} & \textcolor{gray}{46.7}\std{7.6} & \textcolor{gray}{57.9}\std{7.0} \\ \cdashlinelr{1-14}
        DP3$^*$ & 10 & \textbf{100}\std{0} & 58\std{5} & 53\std{3} & 83\std{2} & 40\std{2} & 25\std{1} & 74\std{2} & 91.5\std{0.9} & 74.5\std{5.7} & 54.2\std{1.8} & 71.8\std{4.0} & 79.4\std{2.4} \\
        Flow$^*$ & 10 & 99\std{1} & 69\std{3} & 45\std{3} & 85\std{0} & 42\std{4} & 26\std{3} & 74\std{3} & 92.2\std{0.7} & 74.6\std{5.0} & 55.2\std{3.0} & 70.8\std{3.0} & 79.9\std{2.2} \\
        \midrule
        \multicolumn{14}{l}{\textbf{\textit{Flow-Map / Shortcut / Few-Step:}}} \\
        \textcolor{gray}{CP} & \textcolor{gray}1 & \textcolor{gray}{98}\std{3} & \textcolor{gray}{67}\std{5} & \textcolor{gray}{65}\std{5} & \textcolor{gray}{87}\std{3} & \textcolor{gray}{44}\std{4} & \textcolor{gray}{29}\std{4} & \textcolor{gray}{82}\std{4} & \textcolor{gray}{68.3}\std{1.8} & \textcolor{gray}{32.1}\std{4.3} & \textcolor{gray}{28.7}\std{5.2} & \textcolor{gray}{36.0}\std{2.3} & \textcolor{gray}{54.7}\std{2.9} \\
        \textcolor{gray}{MP1} & \textcolor{gray}1 & \textcolor{gray}{\textbf{100}}\std{0} & \textcolor{gray}{74}\std{3} & \textcolor{gray}{72}\std{3} & \textcolor{gray}{88}\std{5} & \textcolor{gray}{42}\std{1} & \textcolor{gray}{33}\std{0} & \textcolor{gray}{82}\std{2} & \textcolor{gray}{70.5}\std{6.1} & \textcolor{gray}{37.9}\std{5.9} & \textcolor{gray}{34.3}\std{9.5} & \textcolor{gray}{32.3}\std{5.5} & \textcolor{gray}{57.4}\std{5.8} \\
        \textcolor{gray}{OneDP} & \textcolor{gray}1 & \textcolor{gray}{99}\std{1} & \textcolor{gray}{67}\std{2} & \textcolor{gray}{65}\std{2} & \textcolor{gray}{89}\std{4} & \textcolor{gray}{45}\std{3} & \textcolor{gray}{33}\std{6} & \textcolor{gray}{\textbf{83}}\std{4} & \textcolor{gray}{77.7}\std{0.5} & \textcolor{gray}{39.6}\std{9.2} & \textcolor{gray}{38.7}\std{4.4} & \textcolor{gray}{41.7}\std{1.6} & \textcolor{gray}{62.4}\std{3.0} \\
        \textcolor{gray}{OFP} & \textcolor{gray}1 & \textcolor{gray}{\textbf{100}}\std{0} & \textcolor{gray}{\textbf{79}}\std{7} & \textcolor{gray}{\textbf{76}}\std{7} & \textcolor{gray}{{92}}\std{3} & \textcolor{gray}{{46}}\std{4} & \textcolor{gray}{\textbf{39}}\std{0} & \textcolor{gray}{81}\std{3} & \textcolor{gray}{87.9}\std{4.6} & \textcolor{gray}{52.4}\std{5.7} & \textcolor{gray}{43.3}\std{2.4} & \textcolor{gray}{49.2}\std{0.7} & \textcolor{gray}{71.6}\std{4.1} \\
        \cdashlinelr{1-14}
        CP$^*$ & 1 & \text{97}\std{2} & \text{66}\std{0} & \text{54}\std{2} & \text{76}\std{11} & \text{38}\std{0} & \text{25}\std{5} & \text{73}\std{4} & \text{79.8}\std{14} & \text{59.4}\std{12} & \text{41.4}\std{14} & \text{62.0}\std{8.2} & \text{68.4}\std{11.9} \\
        MP1$^*$ & 1 & \text{99}\std{1} & 48\std{2} & 58\std{4} & 82\std{3} & {46}\std{2} & 26\std{3} & 69\std{3} & 91.3\std{1.2} & \textbf{75.4}\std{5.6} & \textbf{57.6}\std{4.6} & 72.8\std{4.2} & 79.7\std{2.8} \\
        FlowPolicy$^*$ & 1 & 98\std{2} & 66\std{1} & 55\std{2} & 86\std{2} & 40\std{1} & 28\std{0} & 76\std{3} & {90.4}\std{0.8} & {67.0}\std{6.2} & {51.2}\std{4.8} & {67.0}\std{3.0} & {76.9}\std{2.5} \\
        \midrule
        \multicolumn{14}{l}{\textbf{\textit{Drifting Methods:}}} \\
        Naive Drifting$^*$ & 1 & \textbf{100}\std{0} & 67\std{3} & 43\std{5} & {95}\std{4} & {46}\std{3} & 29\std{1} & 82\std{6} & {92.7}\std{1.8} & 70.7\std{6.5} & 54.2\std{5.8} & {73.2}\std{5.2} & {79.8}\std{3.5} \\
        Ours$^*$ & 1 & \textbf{100}\std{0} & 67\std{2} & 52\std{4} & \textbf{96}\std{3} & \textbf{48}\std{1} & 30\std{0} & \textbf{83}\std{6} & \textbf{93.2}\std{0.8} & 73.6\std{3.9} & 56.4\std{4.2} & \textbf{74.2}\std{1.4} & \textbf{81.3}\std{2.0} \\
        \bottomrule
    \end{tabular}%
    }
    \begin{flushleft}
        \scriptsize $^*$ Results are reproduced on a single NVIDIA A40. $^\dagger$ Results are cited from DP3. All other baselines cited from OFP.
    \end{flushleft}
    \vspace{-3ex}
\end{table}

\subsection{Experimental Setup}

\textbf{Tasks.} We evaluate IDP on three settings: (i) 2D state- and image-based benchmarks following the MIP protocol~\cite{pan2026adonoisingdispellingmyths} on Robomimic (Lift, Can, Square, Transport, with both \texttt{ph} and \texttt{mh}), Tool-Hang, and PushT; (ii) 3D pointcloud-conditioned manipulation following OFP~\cite{li2026onestepflowpolicyselfdistillation} and FlowPolicy~\cite{zhang2024flowpolicyenablingfastrobust} on 56 tasks across Adroit~\cite{rajeswaran2017learning}, DexArt~\cite{bao2023dexart}, and MetaWorld~\cite{yu2020meta}; and (iii) a real-world ``Pick Peach'' task on a single-arm ALOHA platform with a strictly bimodal demonstration distribution.

\textbf{Baselines.} We compare against multi-step generative policies (DP~\cite{chi2024diffusionpolicyvisuomotorpolicy}, DP3~\cite{ze20243d}, Flow~\cite{lipman2023flowmatchinggenerativemodeling}), few-step / one-step accelerated policies (CP~\cite{prasad2024consistencypolicyacceleratedvisuomotor}, MIP~\cite{pan2026adonoisingdispellingmyths}, MP1~\cite{sheng2025mp1meanflowtamespolicy}, OneDP~\cite{wang2024onestepdiffusionpolicyfast}, OFP~\cite{li2026onestepflowpolicyselfdistillation}, FlowPolicy~\cite{zhang2024flowpolicyenablingfastrobust}), and the explicit drifting baseline (Naive Drifting~\cite{deng2026generativemodelingdrifting}).

\textbf{Metrics.} We report success rates averaged across 3 architectures (Transformer / CNN) and 3 seeds for 2D tasks, and across 3 seeds for 3D tasks. Detailed task descriptions, baseline reproduction settings, hyperparameters, and compute resources are deferred to Appendix~\ref{app:setup}--\ref{app:impl}.



\begin{table}[t]
    \centering
    \caption{Performance on 2D \textbf{state-based} robotics manipulation tasks. We present success rates with different checkpoint selection methods in the format
of (max performance) / (average of last 5 checkpoints), with each averaged across 2 network architectures (Transformer, CNN), 3 training seeds, and 50 different environment initial conditions.}
    \label{tab:2d_summary_state}
    \setlength{\tabcolsep}{4.5pt}
    \resizebox{\textwidth}{!}{%
    \begin{tabular}{lcccccccccccc}
        \toprule
        \multirow{2.5}{*}{\textbf{Methods}} & \multirow{2.5}{*}{\textbf{NFE}} & \multicolumn{2}{c}{Lift} & \multicolumn{2}{c}{Can} & \multicolumn{2}{c}{Square} & \multicolumn{2}{c}{Transport} & \multirow{2.5}{*}{Tool-Hang} & \multirow{2.5}{*}{PushT} & \multirow{2.5}{*}{\textbf{Avg.}} \\ \cmidrule(lr){3-4} \cmidrule(lr){5-6} \cmidrule(lr){7-8} \cmidrule(lr){9-10}
             &     & mh          & ph      & mh        & ph        & mh        & ph        & mh        & ph        &           &           &           \\ \midrule
        \multicolumn{13}{l}{\textbf{\textit{Multi-Step:}}} \\
        DP                       & 100                  & \textbf{1.00}/\textbf{0.99} & \textbf{1.00}/\textbf{1.00} & \textbf{0.98}/0.92 & \textbf{1.00}/\textbf{0.98} & 0.81/\textbf{0.72} & 0.96/\textbf{0.92} & \textbf{0.45}/0.34 & 0.73/\textbf{0.62} & \textbf{0.84}/\textbf{0.71} & 0.95/0.92 & \textbf{0.87}/\textbf{0.81} \\
        Flow                     & 9                    & \textbf{1.00}/\textbf{0.99} & \textbf{1.00}/\textbf{1.00} & 0.96/0.92 & \textbf{1.00}/\textbf{0.98} & 0.76/0.65 & 0.95/0.90 & 0.33/0.23 & \textbf{0.74}/0.59 & 0.70/0.57 & 0.96/\textbf{0.94} & 0.84/0.78 \\ \midrule
        \multicolumn{13}{l}{\textbf{\textit{Few-Step / Flow-Map / Shortcut:}}} \\
        MIP                      & 2                    & \textbf{1.00}/\textbf{0.99} & \textbf{1.00}/\textbf{1.00} & 0.96/\textbf{0.94} & \textbf{1.00}/\textbf{0.98} & 0.80/0.69 & 0.96/0.88 & 0.44/\textbf{0.35} & 0.71/0.58 & 0.68/0.55 & 0.96/\textbf{0.94} & 0.85/0.79 \\
        CP                       & 1                    & 0.99/0.97 & \textbf{1.00}/\textbf{1.00} & 0.83/0.74 & 0.99/0.96 & 0.49/0.45 & 0.88/0.82 & 0.01/0.01 & 0.32/0.24 & 0.43/0.33 & 0.94/0.91 & 0.69/0.64 \\
        MP1                      & 1                    & \textbf{1.00}/\textbf{0.99} & \textbf{1.00}/\textbf{1.00} & 0.96/0.87 & \textbf{1.00}/0.97 & 0.77/0.62 & 0.96/0.88 & 0.24/0.14 & 0.52/0.39 & 0.76/0.63 & 0.95/0.92 & 0.82/0.74 \\ \midrule
        \multicolumn{13}{l}{\textbf{\textit{Drifting Methods:}}} \\
        Naive Drifting           & 1                    & \textbf{1.00}/\textbf{0.99} & \textbf{1.00}/\textbf{1.00} & 0.96/0.91 & 0.99/0.96 & 0.70/0.60 & \textbf{0.97}/0.90 & 0.30/0.22 & 0.68/0.58 & 0.53/0.42 & 0.95/0.91 & 0.81/0.75 \\
        Ours                     & 1                    & \textbf{1.00}/\textbf{0.99} & \textbf{1.00}/\textbf{1.00} & \textbf{0.98}/\textbf{0.94} & \textbf{1.00}/0.97 & \textbf{0.82}/0.68 & 0.95/0.88 & 0.41/0.29 & 0.72/0.58 & 0.81/0.60 & \textbf{0.97}/0.92 & \textbf{0.87}/0.79 \\ \bottomrule
    \end{tabular}
    
    }
    \vspace{-3ex}
\end{table}

\begin{table}[t]
    \centering
    \caption{Performance on 2D \textbf{image-based} robotics manipulation tasks. We present success rates with different checkpoint selection methods in the format
of (max performance) / (average of last 5 checkpoints), with each averaged across 2 network architectures (Transformer, CNN), 3 training seeds, and 50 different environment initial conditions.}
    \label{tab:2d_summary_image}
    \setlength{\tabcolsep}{4.5pt}
    \resizebox{\textwidth}{!}{%
    \begin{tabular}{lcccccccccc}
        \toprule
        \multirow{2.5}{*}{\textbf{Methods}} & \multirow{2.5}{*}{\textbf{NFE}} & \multicolumn{2}{c}{Lift} & \multicolumn{2}{c}{Can} & \multicolumn{2}{c}{Square} & \multirow{2.5}{*}{Tool-Hang} & \multirow{2.5}{*}{PushT} & \multirow{2.5}{*}{\textbf{Avg.}} \\ \cmidrule(lr){3-4} \cmidrule(lr){5-6} \cmidrule(lr){7-8}
             &     & mh          & ph      & mh        & ph        & mh        & ph        &           &           &           \\ \midrule
        \multicolumn{11}{l}{\textbf{\textit{Multi-Step:}}} \\
        DP                       & 100                  & \textbf{1.00}/\textbf{0.98} & \textbf{1.00}/0.99 & 0.90/0.82 & 0.98/\textbf{0.94} & \textbf{0.87}/0.76 & 0.97/\textbf{0.91} & \textbf{0.48}/0.33 & \textbf{0.93}/\textbf{0.88} & \textbf{0.89}/\textbf{0.83} \\
        Flow                     & 9                    & \textbf{1.00}/\textbf{0.98} & \textbf{1.00}/0.99 & 0.93/0.83 & \textbf{0.99}/\textbf{0.94} & 0.86/\textbf{0.78} & \textbf{0.98}/0.90 & 0.47/\textbf{0.35} & 0.92/0.86 & \textbf{0.89}/\textbf{0.83} \\ \midrule
        \multicolumn{11}{l}{\textbf{\textit{Few-Step / Flow-Map / Shortcut:}}} \\
        MIP                      & 2                    & \textbf{1.00}/0.96 & \textbf{1.00}/0.99 & 0.89/0.83 & 0.97/0.93 & 0.82/0.71 & 0.95/0.83 & 0.46/0.28 & 0.87/0.84 & 0.87/0.80 \\
        CP                       & 1                    & \textbf{1.00}/\textbf{0.98} & \textbf{1.00}/\textbf{1.00} & 0.77/0.63 & 0.80/0.60 & 0.70/0.59 & 0.92/0.86 & 0.22/0.12 & 0.88/0.83 & 0.79/0.70 \\
        MP1                      & 1                    & \textbf{1.00}/0.97 & \textbf{1.00}/0.99 & 0.87/0.77 & 0.77/0.68 & 0.85/\textbf{0.78} & 0.96/0.90 & 0.43/0.27 & 0.86/0.79 & 0.84/0.77 \\ \midrule
        \multicolumn{11}{l}{\textbf{\textit{Drifting Methods:}}} \\
        Naive Drifting           & 1                    & 0.99/0.96 & \textbf{1.00}/0.98 & 0.90/0.79 & 0.95/0.84 & 0.80/0.69 & 0.92/0.84 & 0.46/0.26 & 0.91/\textbf{0.86} & 0.87/0.78 \\
        Ours                     & 1                    & \textbf{1.00}/0.97 & \textbf{1.00}/\textbf{0.99} & \textbf{0.95}/\textbf{0.88} & \textbf{0.99}/\textbf{0.94} & 0.82/0.74 & 0.94/0.86 & \textbf{0.48}/0.33 & \textbf{0.92}/\textbf{0.86} & \textbf{0.89}/0.82 \\ \bottomrule
    \end{tabular}
    }
    \vspace{-2ex}
\end{table}

\subsection{One-Step Policy Performance (H1)}

\textbf{Simulated Manipulation.} To address \textbf{H1}, we evaluate the control performance of IDP across the 2D and 3D simulated benchmarks. As detailed in Tables \ref{tab:2d_summary_state}, \ref{tab:2d_summary_image}, and \ref{tab:3d_summary}, IDP achieves highly competitive success rates across state, image, and pointcloud modalities. Compared to the explicit Naive Drifting baseline, IDP generally yields more stable and improved performance, suggesting that the implicit geometric objective can serve as a more reliable correction mechanism than empirical vector field estimation. When assessed against other leading single-step methods (e.g., CP, MP1, and OFP), IDP demonstrates solid performance, matching or slightly exceeding their success rates on the majority of tasks. Furthermore, operating purely at $\text{NFE}=1$, IDP performs closely to the multi-step generative references (e.g., 100-step DP and 10-step DP3) in several settings. These results indicate that incorporating implicit geometry effectively aids in synthesizing high-quality actions without necessarily relying on iterative inference steps.

\begin{figure}[t]
    \centering
    \begin{minipage}[t]{0.30\linewidth}
        \centering
        \label{fig:real_pickpeach}
        \vspace{-0.5em}
        \includegraphics[width=0.8\linewidth]{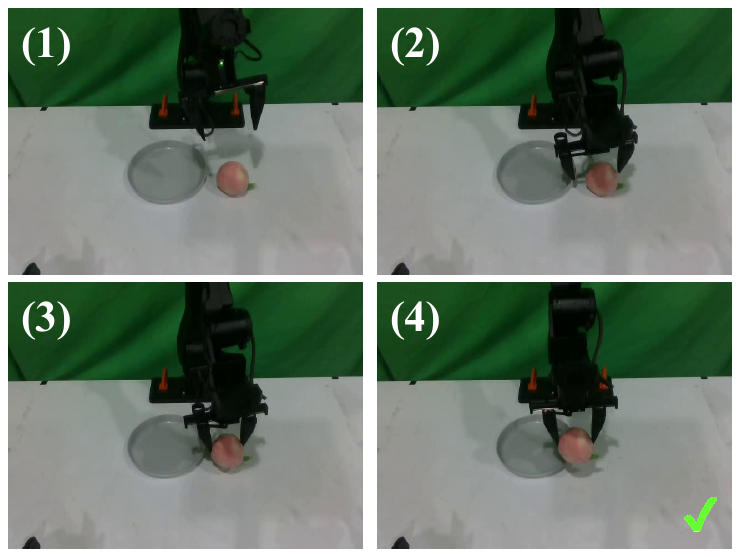}
        \captionof{figure}{Real-world ``Pick Peach'' task visualization.}
    \end{minipage}
    \hfill
    \begin{minipage}[t]{0.30\linewidth}
        \centering
        \captionof{table}{Success rates on the real-world ``Pick Peach'' task.}
        \label{tab:real_robot}
        \vspace{-0.5em}
        \resizebox{\linewidth}{!}{%
        \begin{tabular}{llc}
            \toprule
            \textbf{Modality} & \textbf{Method} & \textbf{SR (\%)} \\
            \midrule
            \multirow{4}{*}{\begin{tabular}[c]{@{}l@{}}\textbf{State-based} \\ \textbf{(\texttt{qpos})}\end{tabular}} 
            & DP & 10 \\
            & MIP & \textbf{40} \\
            & Naive Drifting & 0 \\
            & Ours & 30 \\
            \midrule
            \multirow{4}{*}{\begin{tabular}[c]{@{}l@{}}\textbf{Vision-based} \\ \textbf{(\texttt{image})}\end{tabular}} 
            & DP & 0 \\
            & MIP & 0 \\
            & Naive Drifting & 0 \\
            & Ours & \textbf{50} \\
            \bottomrule
        \end{tabular}
        }
    \end{minipage}
    \hfill
    \begin{minipage}[t]{0.35\linewidth}
        \centering
        \captionof{table}{Ablation results on ToolHang-State (Max / Avg).}
        \label{tab:ablation}
        \resizebox{\linewidth}{!}{%
        \begin{tabular}{lcc}
            \toprule
            \textbf{Method} & \textbf{ToolHang-State} \\
            \midrule
            Naive Drifting & 53.2 / 42.4 \\
            IDP (Ours) & \textbf{81.0} / \textbf{60.0} \\
            \quad w/o terminal eval & 40.0 / 25.5 \\
            \quad uniform neighbors & 65.0 / 59.5 \\
            \quad shuffled neighbors & 75.0 / 52.0 \\
            \quad w/o reference geometry & 78.0 / 54.5 \\
            \quad absolute excess & 75.0 / 57.0 \\
            \bottomrule
        \end{tabular}
        }
    \end{minipage}
    \vspace{-1ex}
\end{figure}




\begin{figure}[t]
\centering
\includegraphics[width=\linewidth]{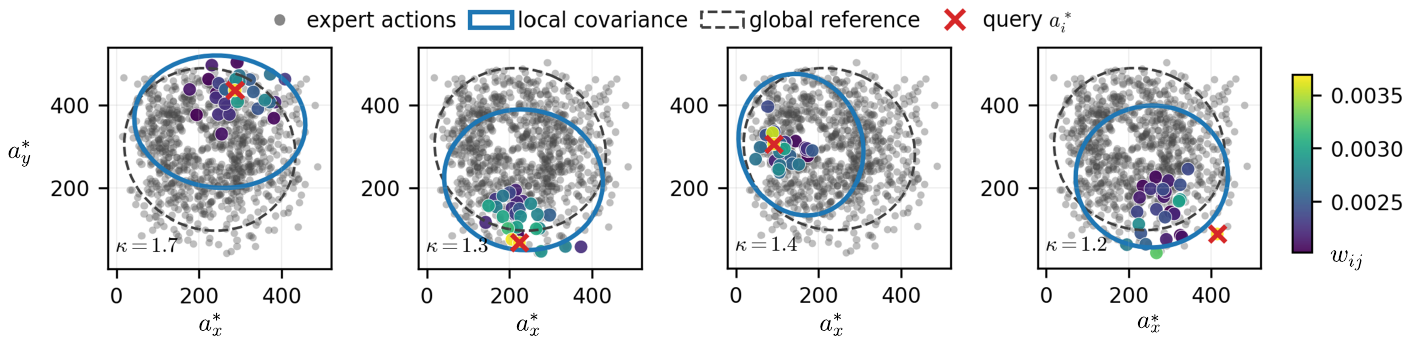}
\caption{Observation-conditioned expert geometry on PushT-State. Colored points denote weighted neighbors; blue and gray ellipses show local and global action covariances.}
\vspace{-1ex}
\label{fig:cov_pusht}
\end{figure}

\textbf{Real-World Manipulation.} We further validate IDP on a single-arm ALOHA platform on a ``Pick Peach'' task with a strictly bimodal demonstration distribution ($50\%$ left, $50\%$ right). Each method is evaluated over $10$ trials ($5$ per mode) under both state-based (\texttt{qpos}) and vision-based modalities; full hardware and protocol are deferred to Appendix~\ref{app:setup_real}.
In the vision-based setting (Table~\ref{tab:real_robot}), IDP achieves $50\%$ success while DP, Naive Drifting, and MIP all fail at $0\%$. Qualitatively, the failing baselines exhibit a consistent \textit{deviation from the action manifold}: they grasp at locations noticeably \textit{behind} the target peach, and multi-step DP additionally outputs invalid intermediate gripper commands (e.g., freezing at $0.769$) while breaking the high-frequency control loop with its sluggish inference. IDP adheres to valid grasping poses with real-time reactivity, supporting our geometric hypothesis. In the state-based setting, where joint-angle proximity is a poor proxy for task context, IDP-\texttt{qpos} ($30\%$) underperforms MIP-\texttt{qpos} ($40\%$); we discuss this regime in Appendix~\ref{app:limitations}.

All evaluated single-step models (including IDP and MIP) collapse to a single placement mode under perfectly balanced bimodal demonstrations. Multi-step DP fails on both modes due to its lack of geometric constraints. IDP retains action-validity within the learned mode; mode coverage remains an open problem we discuss in Appendix~\ref{app:limitations}.

\subsection{Ablation Study (H2 $\sim$ H4)}
\label{sec:ablation}

We ablate the core designs of IDP to address hypotheses \textbf{H2} $\sim$ \textbf{H4}. As our geometric objective primarily benefits tasks with narrow successful action regions, we conduct quantitative ablations on ToolHang-State (Table~\ref{tab:ablation}). 

\textbf{Expert-proximal evaluation (H2):}
Removing the expert-proximal probe collapses the success rates on ToolHang-State from $81.0\%/60.0\%$ to $40.0\%/25.5\%$, well below Naive Drifting ($53.2\%/42.4\%$). The probe is therefore essential for internalizing the local correction structure into the one-step generator.
\textbf{Conditional expert geometry (H3):}
Replacing observation-conditioned neighborhood weights with uniform or shuffled ones degrades the success rates to $65.0\%/59.5\%$ and $75.0\%/52.0\%$ respectively, confirming that the geometry must be condition-specific. Fig.~\ref{fig:cov_pusht} visualizes this construction on PushT-State: observation-similar demonstrations form local covariances that deviate from the global reference with moderate anisotropy ($\kappa\!\approx\!1.2\text{--}1.7$).
\textbf{Reference geometry and local excess (H4):}
Removing the reference geometry preserves the peak success rate but drops the average to $64.5\%$, and replacing the positive relative excess with an absolute one further degrades the policy ($75.0\%/57.0\%$). The reference comparison thus contributes mainly to training stability by filtering out marginal action-scale priors.

\section{Conclusion}

This paper addresses the loss of intermediate error correction in one-step action generation by proposing the Implicit Drifting Policy (IDP). IDP extracts local geometric constraints directly from expert actions under similar observations. By combining this conditional expert geometry with an expert-proximal training evaluation, IDP structurally internalizes the local manifold topology into a single-step generator. Extensive evaluations across 2D, 3D, and real-world manipulation benchmarks demonstrate that IDP effectively adheres to valid action manifolds, substantially improving upon explicit drifting methods while achieving superior geometric precision against strong one-step baselines.
While IDP improves one-step policy precision, several caveats remain: single-step policies struggle to cover multi-modal action distributions, and the conditional expert geometry relies on a reasonable density of expert actions under similar observations. We discuss these and other limitations in Appendix~\ref{app:limitations}; sparse-data and multi-modal regimes are promising directions for future work.

\section{Acknowledgments}
This work was supported by the National Natural Science Foundation of China (Project Number 62595774), Shanghai Frontiers Science Center of Human-centered Artificial Intelligence (ShangHAI), MoE Key Laboratory of Intelligent Perception and Human-Machine Collaboration (KLIP-HuMaCo).

\bibliographystyle{plainnat}
\bibliography{references}

\newpage






\begin{center}
\Large \textbf{Appendix}
\end{center}
\appendix



\section{Proofs and Derivations}\label{app:proofs}

This section provides full derivations for the two propositions stated in the main text and for two additional structural facts that we used informally there. We organize the section as follows: Sec.~\ref{app:proof_prop1} proves the degeneration of explicit drifting under the Dirac empirical conditional (Proposition~\ref{prop:degeneration_explicit_drifting}); Sec.~\ref{app:proof_prop2} proves that the geometry-induced correction field admits a global scalar potential (Proposition~\ref{prop:geometry_induced_potential}); Sec.~\ref{app:kernel_smoothing} interprets the Conditional Expert Geometry as a non-parametric kernel-smoothed estimator of the conditional covariance; and Sec.~\ref{app:diag_form} derives the diagonal instantiation as a tractable approximation of the full multivariate excess.

\subsection{Proof of Proposition~\ref{prop:degeneration_explicit_drifting} (Degeneration of Explicit Drifting in Behavior Cloning)}\label{app:proof_prop1}

\begin{proof}[Proof]
We compute the empirical conditional drifting field explicitly under the Dirac assumptions and substitute it into the field-displaced regression objective.

We first restate the conditional drifting field. Conditioning on $o_i$, the attraction and repulsion components are
\begin{align}
V_{\hat{p}}^{+}(a \mid o_i)
&= \frac{\mathbb{E}_{y \sim \hat{p}_{\text{data}}(\cdot \mid o_i)}\!\left[ k(a,y)(y-a) \right]}
        {\mathbb{E}_{y \sim \hat{p}_{\text{data}}(\cdot \mid o_i)}\!\left[ k(a,y) \right]},
&
V_{\hat{q}}^{-}(a \mid o_i)
&= \frac{\mathbb{E}_{y \sim \hat{q}_{\theta}(\cdot \mid o_i)}\!\left[ k(a,y)(y-a) \right]}
        {\mathbb{E}_{y \sim \hat{q}_{\theta}(\cdot \mid o_i)}\!\left[ k(a,y) \right]},
\end{align}
where the empirical conditional distributions are $\hat{p}_{\text{data}}(y \mid o_i) = \delta(y - a_i^*)$ and $\hat{q}_{\theta}(y \mid o_i) = \delta(y - a)$ with $a = f_\theta(o_i, a_0, 0)$.

We next evaluate the attraction $V_{\hat{p}}^{+}$. By the sifting property of the Dirac measure,
\begin{align}
\mathbb{E}_{y \sim \hat{p}_{\text{data}}(\cdot \mid o_i)}[k(a,y)(y-a)]
  &= \int k(a,y)\,(y-a)\,\delta(y - a_i^*)\,dy = k(a, a_i^*)\,(a_i^* - a), \\
\mathbb{E}_{y \sim \hat{p}_{\text{data}}(\cdot \mid o_i)}[k(a,y)]
  &= \int k(a,y)\,\delta(y - a_i^*)\,dy = k(a, a_i^*).
\end{align}
Since $k$ is positive ($k(a, a_i^*) > 0$ for all $a$, by assumption), the ratio is well-defined and the kernel cancels out exactly:
\begin{equation}
V_{\hat{p}}^{+}(a \mid o_i) \;=\; \frac{k(a, a_i^*)\,(a_i^* - a)}{k(a, a_i^*)} \;=\; a_i^* - a.
\end{equation}
The kernel thus contributes nothing to the conditional attraction direction once the support of $\hat{p}_{\text{data}}(\cdot \mid o_i)$ collapses onto a single point.

Turning to the repulsion $V_{\hat{q}}^{-}$, the term reduces to a self-evaluation since $\hat{q}_\theta(\cdot \mid o_i)$ is concentrated on the prediction $a$:
\begin{align}
\mathbb{E}_{y \sim \hat{q}_\theta(\cdot \mid o_i)}[k(a,y)(y-a)]
  &= \int k(a,y)\,(y-a)\,\delta(y - a)\,dy = k(a, a)\,(a - a) = 0, \\
\mathbb{E}_{y \sim \hat{q}_\theta(\cdot \mid o_i)}[k(a,y)]
  &= \int k(a,y)\,\delta(y - a)\,dy = k(a, a).
\end{align}
By translation invariance of $k$, $k(a, a) = k(0, 0)$, which is a strictly positive constant independent of $a$. Hence
\begin{equation}
V_{\hat{q}}^{-}(a \mid o_i) \;=\; \frac{0}{k(a, a)} \;=\; 0.
\end{equation}

Composing the two components,
\begin{equation}
V_{\hat{p},\hat{q}}(a \mid o_i)
  \;=\; V_{\hat{p}}^{+}(a \mid o_i) - V_{\hat{q}}^{-}(a \mid o_i)
  \;=\; (a_i^* - a) - 0
  \;=\; a_i^* - a.
\end{equation}
The empirical conditional drifting field thus collapses to a constant attractive vector pointing from the prediction to the expert action, independently of the choice of kernel.

It remains to substitute this drifting field into the field-displaced regression objective. With $a = f_\theta(o_i, a_0, 0)$,
\begin{align}
\mathcal{L}_{\mathrm{Drift}}(\theta)
&= \mathbb{E}_{a_0 \sim p_0}\!\left\| f_\theta(o_i, a_0, 0) - \mathrm{sg}\!\left[ f_\theta(o_i, a_0, 0) + V_{\hat{p},\hat{q}}(f_\theta(o_i, a_0, 0) \mid o_i) \right] \right\|_2^2 \\
&= \mathbb{E}_{a_0 \sim p_0}\!\left\| a - \mathrm{sg}\!\left[ a + (a_i^* - a) \right] \right\|_2^2 \\
&= \mathbb{E}_{a_0 \sim p_0}\!\left\| a - \mathrm{sg}[a_i^*] \right\|_2^2.
\end{align}
Since $a_i^*$ is a fixed dataset target and not a function of $\theta$, the stop-gradient operator acts as the identity on it, yielding
\begin{equation}
\mathcal{L}_{\mathrm{Drift}}(\theta) \;=\; \mathbb{E}_{a_0 \sim p_0}\!\left\| f_\theta(o_i, a_0, 0) - a_i^* \right\|_2^2,
\end{equation}
which is exactly the isotropic Mean Squared Error of behavior cloning. All directional information that the kernel could in principle encode is annihilated by the Dirac concentration of the conditional empirical distributions, and no local geometric guidance survives.
\end{proof}

\subsection{Proof of Proposition~\ref{prop:geometry_induced_potential} (Geometry-Induced Correction Potential)}\label{app:proof_prop2}

\begin{proof}[Proof]
We prove the claim by defining a candidate scalar potential, computing its gradient component-wise from first principles, and verifying that the negative gradient coincides with the geometry-induced correction field.

We first define the candidate potential
\begin{equation}
E_i^{\mathrm{geo}}: \mathbb{R}^{d_a} \to \mathbb{R}_{\geq 0},
\qquad
E_i^{\mathrm{geo}}(a) \;=\; \tfrac{1}{2}\,(a - a_i^*)^\top M_i\, (a - a_i^*).
\end{equation}
Since $M_i$ is symmetric and positive semi-definite by assumption, $E_i^{\mathrm{geo}}$ is a non-negative quadratic form, with $E_i^{\mathrm{geo}}(a) = 0$ if and only if $a - a_i^* \in \ker(M_i)$. As a polynomial in $a$, it is smooth ($C^{\infty}$) on $\mathbb{R}^{d_a}$, so its gradient is well-defined everywhere.

To compute this gradient, let $r := a - a_i^*$, so that $r_j = a_j - a_{i,j}^*$ and $\partial r_j / \partial a_l = \delta_{jl}$. Expanding the quadratic form along its components,
\begin{equation}
E_i^{\mathrm{geo}}(a) \;=\; \tfrac{1}{2} \sum_{j=1}^{d_a} \sum_{k=1}^{d_a} (M_i)_{j,k}\, r_j\, r_k.
\end{equation}
For an arbitrary index $l \in \{1,\ldots,d_a\}$, differentiating term by term and using $\partial(r_j r_k) / \partial a_l = \delta_{jl}\, r_k + r_j\, \delta_{kl}$,
\begin{align}
\frac{\partial E_i^{\mathrm{geo}}}{\partial a_l}
  &= \tfrac{1}{2} \sum_{j=1}^{d_a} \sum_{k=1}^{d_a} (M_i)_{j,k}\, \big( \delta_{jl}\, r_k + r_j\, \delta_{kl} \big) \\
  &= \tfrac{1}{2}\, \Big[ \sum_{k=1}^{d_a} (M_i)_{l,k}\, r_k \;+\; \sum_{j=1}^{d_a} (M_i)_{j,l}\, r_j \Big].
\end{align}
The symmetry of $M_i$ implies $(M_i)_{j,l} = (M_i)_{l,j}$, so the two sums coincide and
\begin{equation}
\frac{\partial E_i^{\mathrm{geo}}}{\partial a_l} \;=\; \sum_{k=1}^{d_a} (M_i)_{l,k}\, r_k \;=\; (M_i\, r)_l.
\end{equation}
Stacking the components into a vector,
\begin{equation}
\nabla_a E_i^{\mathrm{geo}}(a) \;=\; M_i\, r \;=\; M_i\, (a - a_i^*).
\end{equation}

Negating the gradient yields exactly the geometry-induced correction field defined in the main text:
\begin{equation}
-\nabla_a E_i^{\mathrm{geo}}(a) \;=\; -M_i\, (a - a_i^*) \;=\; M_i\, (a_i^* - a) \;=\; \Delta_i^{\mathrm{geo}}(a).
\end{equation}
That is, $\Delta_i^{\mathrm{geo}}$ is the negative gradient of a globally defined smooth scalar potential on $\mathbb{R}^{d_a}$. By the fundamental theorem of vector calculus on the simply connected domain $\mathbb{R}^{d_a}$, this implies (i) $\nabla \times \Delta_i^{\mathrm{geo}} = 0$ everywhere; (ii) the line integral of $\Delta_i^{\mathrm{geo}}$ along any closed curve vanishes; and (iii) the line integral between any two points $a$ and $b$ is path-independent and equals $E_i^{\mathrm{geo}}(a) - E_i^{\mathrm{geo}}(b)$. Therefore $\Delta_i^{\mathrm{geo}}$ is a conservative vector field.
\end{proof}

\begin{remark}[Combined potential]\label{rem:combined_potential}
Adding the isotropic regression term to the geometric potential yields the full $E_i$ used by IDP:
\begin{equation}
E_i(a) \;=\; \tfrac{1}{2}\,\|a - a_i^*\|_2^2 + E_i^{\mathrm{geo}}(a)
       \;=\; \tfrac{1}{2}\,(a - a_i^*)^\top (I + M_i)\, (a - a_i^*).
\end{equation}
By linearity of the gradient and the component-wise computation above,
\begin{equation}
-\nabla_a E_i(a) \;=\; -(a - a_i^*) - M_i\, (a - a_i^*) \;=\; (I + M_i)\,(a_i^* - a),
\end{equation}
which is exactly the combined correction force quoted in the main text. Since $I + M_i \succ 0$ for any $M_i \succeq 0$, $E_i$ is a strictly convex function with a unique global minimum at $a = a_i^*$.
\end{remark}

\subsection{Conditional Expert Geometry as Local Kernel Smoothing}\label{app:kernel_smoothing}

Although the empirical conditional distribution under behavior cloning is too sparse to support an explicit drifting field (Sec.~\ref{app:proof_prop1}), the dataset \emph{as a whole} provides enough information to estimate the local second-order structure of the action manifold. We make this connection precise by interpreting the Conditional Expert Geometry $G_i$ in Eq.~\eqref{eq:cond_expert_geometry} as a non-parametric kernel-smoothed estimator of the conditional covariance $\Sigma_{a^* \mid o}$.

\begin{remark}[Kernel-smoothed local covariance]\label{rem:kernel_smoothing}
The matrix
\(
G_i = \mathbb{E}_{j \sim \mathcal{D}\setminus\{i\}}\!\left[ w_{ij}\,(a_j^* - a_i^*)(a_j^* - a_i^*)^\top \right]
\)
is structurally identical to a Nadaraya--Watson estimator applied to the centered second moment of expert actions, with the soft weights $w_{ij}$ acting as a smooth nearest-neighbor selector in observation feature space.
\end{remark}

\paragraph{Argument.}
Recall that the weights are obtained by row-normalizing the standardized similarities $\bar{s}_{ij} = (s_{ij} - \mu_i)/\sigma_i$ followed by softmax, yielding $w_{ij} \propto \exp(\bar{s}_{ij})$. As the embedding $\phi_\theta$ becomes informative for the conditional $p(a^* \mid o)$, the weights concentrate on demonstrations $j$ whose observations satisfy $h_j \approx h_i$. In the asymptotic regime where the dataset becomes dense and the soft neighborhood becomes a local indicator,
\begin{equation}
G_i
\;\xrightarrow[\substack{|\mathcal{D}| \to \infty \\ \text{soft NN}}]{}\;
\mathbb{E}_{a^* \sim p(a^* \mid o_i)}\!\left[ (a^* - a_i^*)(a^* - a_i^*)^\top \right].
\end{equation}
Decomposing the right-hand side around the conditional mean $\mu_i^* := \mathbb{E}[a^* \mid o_i]$ and using the bias-variance identity for the second moment of a translated random variable,
\begin{equation}
\mathbb{E}_{a^* \mid o_i}\!\left[ (a^* - a_i^*)(a^* - a_i^*)^\top \right]
\;=\; \Sigma_{a^* \mid o_i} \;+\; (\mu_i^* - a_i^*)(\mu_i^* - a_i^*)^\top.
\end{equation}
The dominant term is the conditional covariance $\Sigma_{a^* \mid o_i}$, and the residual rank-one term is controlled by the gap between the query expert action $a_i^*$ and the conditional mean. When $a_i^*$ is itself a typical sample under $o_i$ (which holds for behavior cloning datasets where each demonstration is collected from an expert distribution), the bias term is small, and $G_i$ provides a faithful estimator of $\Sigma_{a^* \mid o_i}$ from a single forward pass over the minibatch.

This nonparametric viewpoint clarifies why the eigenvalues of $G_i$ encode local task constraints: directions with small conditional variance correspond to tightly bound action coordinates under the current observation, whereas directions with large variance admit redundancy that the policy may absorb without affecting task success.

\subsection{General Multivariate Excess and Diagonal Instantiation}\label{app:diag_form}

In Sec.~4.3 of the main text we instantiated the geometry excess via a coordinate-wise comparison of standardized inverse variances and noted, in passing, the existence of a more general multivariate form. Here we make that derivation explicit and discuss why the diagonal instantiation is appropriate for our setting.

\paragraph{Full multivariate excess.}
Let $P_i^{\mathrm{cond}} := (G_i + \varepsilon I)^{-1}$ and $P^{\mathrm{ref}} := (\Sigma_{\mathrm{ref}} + \varepsilon I)^{-1}$ be the regularized conditional and reference precision matrices. To compare them on a common scale, we whiten $P_i^{\mathrm{cond}}$ against $P^{\mathrm{ref}}$:
\begin{equation}
R_i \;:=\; (P^{\mathrm{ref}})^{-1/2}\, P_i^{\mathrm{cond}}\, (P^{\mathrm{ref}})^{-1/2}.
\end{equation}
Eigenvalues $\lambda_d(R_i)$ of $R_i$ measure how much tighter the conditional precision is along each generalized direction relative to the reference. The natural matrix-valued positive excess is then
\begin{equation}
M_i^{\mathrm{full}} \;=\; U_i\, \mathrm{Diag}\!\Big[\,\big( \lambda_d(R_i) - 1 \big)_+\,\Big]\, U_i^\top,
\end{equation}
where $U_i$ is the orthogonal matrix of eigenvectors of $R_i$ and $(\cdot)_+ = \max(\cdot, 0)$.

\paragraph{Reduction to the diagonal form.}
Computing $M_i^{\mathrm{full}}$ requires a generalized eigenvalue decomposition (GEVD) of $P_i^{\mathrm{cond}}$ and $P^{\mathrm{ref}}$ at a cost of $\mathcal{O}(d_a^3)$ \emph{per training sample}, which is prohibitive in the inner loop of policy training. Under the diagonal restriction $G_i \approx \mathrm{Diag}(v_{i}^{\mathrm{cond}})$ and $\Sigma_{\mathrm{ref}} \approx \mathrm{Diag}(v^{\mathrm{ref}})$, both precision matrices become diagonal, $U_i = I$, and the eigenvalue ratio reduces to the per-coordinate inverse-variance ratio:
\begin{equation}
\lambda_d(R_i) \;=\; \frac{1\big/(v_{i,d}^{\mathrm{cond}} + \varepsilon)}{1\big/(v_d^{\mathrm{ref}} + \varepsilon)} \;=\; \frac{v_d^{\mathrm{ref}} + \varepsilon}{v_{i,d}^{\mathrm{cond}} + \varepsilon}.
\end{equation}
Standardizing the coordinate-wise inverse standard deviations (Sec.~4.3 of the main text) and applying $\mathrm{ReLU}(\cdot - 1)$ yields exactly the diagonal $M_i = \mathrm{Diag}(m_i)$ used in our implementation.

\paragraph{Why the diagonal approximation is justified here:}
\begin{itemize}
\item \emph{Computational cost.} The cost drops from $\mathcal{O}(d_a^3)$ to $\mathcal{O}(d_a)$ per sample, eliminating GEVD from the training inner loop entirely.
\item \emph{Use case for $M_i$.} The metric $M_i$ is consumed only via the contraction $r^\top M_i r$ for $r = a - a_i^*$. Coordinate-wise penalties along physically meaningful action axes (joint angles, end-effector velocities, gripper command) already constitute a strong inductive bias; cross-coordinate coupling is in principle informative but is empirically a higher-order effect for the action dimensionalities considered ($d_a \in \{2, 7, 10, 26\}$).
\item \emph{Statistical stability.} Diagonal estimation is far more sample-efficient than full-matrix estimation in the per-minibatch regime. Estimating $\mathcal{O}(d_a^2)$ off-diagonal entries from $\mathcal{O}(B)$ effective neighbors (where $B$ is the minibatch size) is statistically unreliable; diagonal entries, in contrast, only need scalar second-moment estimates and are robust at typical batch sizes.
\end{itemize}
The diagonal instantiation thus trades a small fraction of the full geometric expressiveness for a substantial gain in computational and statistical robustness, which we find essential for stable training across the benchmark suite.

\section{Algorithms}\label{app:algorithms}

Algorithm~\ref{alg:idp_training} summarizes IDP's training loop, which computes the geometry-aware total loss in Eq.~\eqref{eq:idp_total_loss}. Inference is described separately in prose below, since it reduces to a single forward pass.

\begin{algorithm}[t]
\caption{IDP Training}
\label{alg:idp_training}
\begin{algorithmic}[1]
\State \textbf{Input:} Dataset $\mathcal{D} = \{(o_i, a_i^*)\}_{i=1}^N$, schedule scalar $t_* \in (0, 1)$
\While{not converged}
  \State Sample minibatch $\{(o_i, a_i^*)\}_{i=1}^B \sim \mathcal{D}$
  \State $h_i \gets \mathrm{sg}[\phi_\theta(o_i)]/\|\cdot\|_2$;\quad $w_{ij} \gets \mathrm{softmax}_j\!\big[(h_i^\top h_j - \mu_i)/\sigma_i\big]$ \Comment{neighborhood}
  \State $M_i \gets \mathrm{Diag}\!\big[\,\mathrm{ReLU}(s_{i,d}^{\mathrm{cond}}/s_d^{\mathrm{ref}} - 1)\,\big]$ \Comment{geometry excess (Sec.~4.3)}
  \State $\varepsilon_i \sim \mathcal{N}(0, I)$;\quad $\tilde a_i \gets a_i^* + (1 - t_*)\,\varepsilon_i$
  \State $y_i \gets f_\theta(o_i,\, 0,\, 0)$;\quad $z_i \gets f_\theta(o_i,\, \tilde a_i,\, t_*)$ \Comment{proposal \& probe}
  \State $\theta \gets \theta - \lambda\,\nabla_\theta\!\big[\, E_i(y_i) + \lambda_{\mathrm{prox}}\, E_i(z_i) \,\big]$
\EndWhile
\State \Return $f_\theta$
\end{algorithmic}
\end{algorithm}

\paragraph{Inference (deployment).} The trained policy is deployed as $a = f_\theta(o,\, 0,\, 0)$ in a single forward pass per call ($\mathrm{NFE} = 1$). The neighborhood weights $w_{ij}$, the geometry $M_i$, the expert-proximal probe $z_i$, and the proximal loss weight $\lambda_{\mathrm{prox}}$ are all training-only.

A few additional non-obvious points about the training loop are worth highlighting.

\begin{itemize}
\item \textbf{Stop-gradient on the geometry path.} The conditional weights $w_{ij}$, the geometry $M_i$, and the standardized inverse-variance ratios in lines~4--5 of Alg.~\ref{alg:idp_training} are all detached from the computation graph. Their role is to \emph{shape} the loss landscape for $\theta$, not to be optimized themselves; gradient flow reaches the policy parameters $\theta$ only through the residuals $y_i - a_i^*$ and $z_i - a_i^*$ in line~8.

\item \textbf{Per-minibatch reference geometry.} $v_d^{\mathrm{ref}}$ used in line~5 is estimated within the current minibatch, $v_d^{\mathrm{ref}} = \tfrac{1}{B}\sum_i (a_{i,d}^* - \bar a_d^*)^2$. This avoids any extra dataset-level pass and adapts naturally when $\mathcal{D}$ is augmented or filtered. 

\item \textbf{Self-inclusive neighborhood.} The softmax in line~4 is taken over all $B$ minibatch indices including $j = i$. The diagonal entry $w_{ii}$ contributes a zero term to $G_i$ via $(a_i^* - a_i^*)(a_i^* - a_i^*)^\top = 0$, so explicit self-exclusion is not required and avoids special-case logic; this matches the formulation in Eq.~\eqref{eq:cond_expert_geometry}.
\end{itemize}

\section{Detailed Experimental Setup}\label{app:setup}

\subsection{2D Benchmarks}\label{app:setup_2d}

We follow the protocol of MIP~\cite{pan2026adonoisingdispellingmyths} for all 2D experiments. The benchmark suite consists of six continuous-control manipulation tasks:
\begin{itemize}
\item \textbf{Robomimic Lift / Can / Square / Transport}: four manipulation primitives ranging from picking-and-lifting (Lift, easy) to two-arm transport (hardest). Each task is evaluated on \texttt{ph} (proficient-human, $200$ expert demonstrations) and \texttt{mh} (multi-human, $300$ demonstrations from operators of varying skill).
\item \textbf{Tool-Hang}: a long-horizon precision task where a hook tool must be inserted onto a wall mount. Notable for its narrow valid grasping pose region, which makes it a strong testbed for IDP's local geometric correction.
\item \textbf{PushT}: a 2D pushing task with an underactuated dynamic shape. The action manifold is broad, making the task relatively insensitive to anisotropic geometric constraints --- a property we verify both in our quantitative ablations and in our qualitative covariance visualization (Fig.~3 in the main text).
\end{itemize}

For each task we evaluate under two observation modalities: (i) \emph{state-based}, where the observation is the low-dimensional proprioceptive state of the simulator; and (ii) \emph{image-based}, where the observation consists of RGB rendering(s). We use the same dataset splits, action chunking horizons, and replan strides as MIP. Evaluation is run for $50$ environment initial conditions per seed, repeated over $3$ random seeds, and averaged across two architectures (Transformer and CNN) for both observation modalities. Final numbers are reported as ``best checkpoint / average of last $5$ checkpoints'' following the MIP protocol.

\subsection{3D Benchmarks}\label{app:setup_3d}

For 3D pointcloud-conditioned manipulation we follow OFP~\cite{li2026onestepflowpolicyselfdistillation} and FlowPolicy~\cite{zhang2024flowpolicyenablingfastrobust}. The suite contains $56$ tasks organized as:
\begin{itemize}
\item \textbf{Adroit}~\cite{rajeswaran2017learning} (3 tasks): Hammer, Door, Pen. A $26$-DoF dexterous hand-arm, requiring high-precision contact-rich manipulation.
\item \textbf{DexArt}~\cite{bao2023dexart} (4 tasks): Laptop, Faucet, Bucket, Toilet. Articulated-object manipulation with diverse joint configurations.
\item \textbf{MetaWorld}~\cite{yu2020meta} (49 tasks): a broad set of parallel-gripper tasks. Following OFP, we partition the $49$ tasks into difficulty levels (28 Easy / 11 Medium / 5 Hard / 5 Very-Hard) for analysis.
\end{itemize}

Observations are point clouds of the workspace ($1024$ points per frame, sampled following DP3~\cite{ze20243d}). All training and evaluation hyperparameters that are not specific to IDP follow OFP / FlowPolicy unchanged. Each method is averaged over $3$ random seeds, and per-task standard deviations are reported in Table~\ref{tab:full_3d}.

\subsection{Real-world ``Pick Peach'' Setup}\label{app:setup_real}

\paragraph{Hardware.} We use a single-arm ALOHA platform equipped with a parallel-jaw gripper. Two RGB cameras provide visual feedback: one front-view and one head-mounted (egocentric).

\paragraph{Task.} The robot must pick up a peach placed at one of two designated positions and transport it onto a target plate. The task simultaneously tests spatial generalization (peach position varies within each side) and multi-modal modeling (left vs. right placement).

\paragraph{Demonstrations.} We collect teleoperated demonstrations under a strictly bimodal placement distribution: $50\%$ of trajectories place the peach on the left side of the plate and $50\%$ on the right. The two modes are exactly balanced, providing a clean test of whether single-step policies can preserve multi-modality without iterative stochastic injection.

\paragraph{Evaluation.} Each method is evaluated for $10$ trials, $5$ from each mode (left / right). A trial is counted as successful only if the peach is grasped, lifted, transported, and released onto the plate. We test under two observation modalities: state-based (\texttt{qpos} only, no visual feedback) and vision-based (both RGB streams). Results are summarized in Table~\ref{tab:real_robot} of the main text.

\subsection{Baseline Reproduction Details}\label{app:setup_baselines}

For 2D state- and image-based experiments, the baselines DP, MIP, MP1, and Flow follow MIP's protocol and code base unchanged. For 3D experiments, the multi-step DP3 / Flow baselines and the few-step CP / MP1 / FlowPolicy baselines are reproduced from the OFP repository on a single NVIDIA A40 to ensure fair compute comparison. Results marked with $^*$ in Table~\ref{tab:3d_summary} (and Tab.~\ref{tab:full_3d}) indicate this re-running protocol; numbers without $^*$ are cited from the corresponding original papers (OFP / FlowPolicy / DP3) when reproduction was infeasible due to compute budget.

The Naive Drifting baseline corresponds to the original empirical drifting formulation~\cite{deng2026generativemodelingdrifting} adapted to the same code base as IDP, with the geometry term zeroed out ($M_i = 0$). This isolates the effect of the conditional expert geometry from any other architectural difference.

\section{Implementation Details and Hyperparameters}\label{app:impl}

\subsection{Architecture}\label{app:arch}

\paragraph{Policy backbone.}
For 2D experiments we adopt the Transformer and CNN backbones from MIP~\cite{pan2026adonoisingdispellingmyths} unchanged. Each task is run with both architectures and the success rates are then averaged. For 3D experiments we use the DP3 PointNet-based encoder~\cite{ze20243d} for the point-cloud observation and a Transformer policy network, identical to the setup used by OFP~\cite{li2026onestepflowpolicyselfdistillation} and FlowPolicy~\cite{zhang2024flowpolicyenablingfastrobust}.

\paragraph{Observation embedding $\phi_\theta$.}
The embedding $\phi_\theta(o_i)$ used to compute the neighborhood weights $w_{ij}$ is read from the final hidden representation of the observation encoder, prior to any task-specific head. Stop-gradient (\texttt{detach}) and $L_2$-normalization are applied to $\phi_\theta(o_i)$ before computing pairwise inner products, so that the neighborhood weights are stable across optimizer steps and do not back-propagate into the encoder via the geometry path.

\subsection{Hyperparameters}\label{app:hyper}

Table~\ref{tab:hyper} summarizes all IDP-specific hyperparameters. For all other quantities (learning rate, batch size, training steps, action chunking horizon, evaluation protocol) we strictly follow MIP for 2D and OFP / FlowPolicy for 3D.

\begin{table}[h]
\centering
\caption{IDP-specific hyperparameters. All values are kept identical across 2D state, 2D image, 3D, and real-world experiments unless otherwise noted.}
\label{tab:hyper}
\setlength{\tabcolsep}{8pt}
\renewcommand{\arraystretch}{1.15}
\resizebox{\textwidth}{!}{%
\begin{tabular}{l c l}
\toprule
\textbf{Symbol} & \textbf{Value} & \textbf{Description} \\
\midrule
$t_*$ & $0.9$ & Schedule scalar for the expert-proximal probe (matches MIP). \\
$\lambda_{\mathrm{prox}}$ & $\big(t_*/(1-t_*)\big)^{2} = 81$ & Loss weight of the expert-proximal evaluation (Eq.~\eqref{eq:idp_total_loss}). \\
Probe noise $\varepsilon_i$ & $\mathcal{N}(0, I)$ & Scaled by $(1{-}t_*)$ in $\tilde a_i = a_i^* + (1-t_*)\varepsilon_i$. \\
Neighborhood scope & full minibatch & $w_{ij}$ over all $j \in \{1, \ldots, B\}$ including $j = i$. \\
Softmax temperature & $1$ & Applied directly on row z-scored similarities (no extra temperature). \\
Numerical $\varepsilon$ & $10^{-6}$ & Used in $L_2$-normalization, variance flooring, and ratio denominators. \\
$v_d^{\mathrm{ref}}$ estimation & per-minibatch & $v_d^{\mathrm{ref}} = \tfrac{1}{B}\sum_i (a_{i,d}^* - \bar a_d^*)^2$, no EMA / no precompute. \\
Stop-gradient on $w_{ij}, M_i$ & yes & Geometry path does not flow gradient into $\theta$. \\
\bottomrule
\end{tabular}}
\end{table}

\paragraph{Implementation note on $\lambda_{\mathrm{prox}}$.}
The choice $t_* = 0.9$ matches the value used by MIP~\cite{pan2026adonoisingdispellingmyths}, and we did not tune $t_*$ separately for IDP. The proximal weight $\lambda_{\mathrm{prox}} = (t_*/(1-t_*))^2 = 81$ follows directly from the practical implementation of MIP, in which each per-anchor loss is divided by the squared distance from its evaluation time to the supervision target along the linear interpolation path; the two divisions $1/t_*^2$ and $1/(1-t_*)^2$ in the implementation differ only by a global scalar from the form in Eq.~\eqref{eq:idp_total_loss}. We use this single value of $\lambda_{\mathrm{prox}}$ across all 2D, 3D, and real-world experiments without per-task tuning, so IDP introduces no new tunable hyperparameters relative to MIP.

\subsection{Compute Resources}\label{app:compute}

All training runs are performed on a single NVIDIA A40. The bulk of the computational overhead introduced by IDP relative to MIP / OFP is concentrated in the inner loop:
\begin{itemize}
\item One additional forward pass per minibatch to obtain the expert-proximal probe $z_i$, which approximately doubles the policy-network cost per training step.
\item One $B \times B$ similarity matrix and one $B \times B$ softmax for the neighborhood weights, $\mathcal{O}(B^2 d_h)$ where $d_h$ is the embedding dimension.
\item Per-coordinate (diagonal) variance estimates for $G_i$ and $\Sigma_{\mathrm{ref}}$, $\mathcal{O}(B^2 d_a)$.
\end{itemize}
For the typical batch sizes used in our experiments, this overhead is dominated by the additional forward pass; the geometry-specific computations contribute a small fraction of the per-step wall-clock. Inference, by construction, has identical $\mathrm{NFE} = 1$ cost to MIP / OFP / Naive Drifting, since IDP deploys $f_\theta$ in a single forward pass. 

\section{Detailed Per-Task Results}\label{app:results}

We provide complete per-task breakdowns of the simulation experiments. Table~\ref{tab:full_2d} reports per-architecture and per-task numbers for the 2D state- and image-based suites; Table~\ref{tab:full_3d} reports per-task numbers for the 3D pointcloud suite (Adroit, DexArt, and MetaWorld). For 3D, Table~\ref{tab:full_3d_expert} additionally reports the success rate of the expert demonstrators that were used to collect the imitation data, providing an upper bound on imitation-learning performance.

\section{Limitations}\label{app:limitations}

\begin{table}[t]
    \centering
    \caption{Detailed 2D simulation results on \textbf{state-based} robotic manipulation tasks. For each task, we report the best checkpoint performance / averaged performance over the last 5 checkpoints. Each experiment is run with 3 seeds, and we report the average performance across all seeds.}
    \label{tab:full_2d}
    \setlength{\tabcolsep}{4.5pt}
    \resizebox{\textwidth}{!}{%
    \begin{tabular}{llccccccccccc}
    \toprule
    \multirow{2.5}{*}{Architecture} & \multirow{2.5}{*}{Method} & \multicolumn{2}{c}{Lift} & \multicolumn{2}{c}{Can} & \multicolumn{2}{c}{Square} & \multicolumn{2}{c}{Transport} & \multirow{2.5}{*}{Tool-Hang} & \multirow{2.5}{*}{Push-T} \\ \cmidrule{3-10}
                                  &                         & mh          & ph         & mh         & ph         & mh           & ph          & mh            & ph            &           &            \\ \midrule
\multirow{7}{*}{Transformer}  & DP                      & \textbf{1.00}/0.99   & \textbf{1.00}/\textbf{1.00}  & \textbf{0.97}/0.89  & \textbf{1.00}/0.97  & 0.74/0.67    & 0.95/0.90   & 0.37/0.25     & 0.64/0.54     & 0.77/\textbf{0.68} & 0.92/0.88   \\
                              & Flow                    & \textbf{1.00}/\textbf{1.00}   & \textbf{1.00}/\textbf{1.00}  & 0.92/0.87  & \textbf{1.00}/\textbf{0.98}  & 0.69/0.53    & 0.94/0.88   & 0.23/0.14     & \textbf{0.71}/0.56     & 0.58/0.46 & 0.94/\textbf{0.92}  \\
                              & MIP                     & \textbf{1.00}/0.99   & \textbf{1.00}/\textbf{1.00}  & 0.94/\textbf{0.91}  & \textbf{1.00}/\textbf{0.98}  & \textbf{0.77}/\textbf{0.69}    & \textbf{0.98}/\textbf{0.91}   & \textbf{0.40}/\textbf{0.26}     & 0.69/0.54     & 0.73/0.61 & 0.94/0.91   \\
                              & CP                      & 0.98/0.96   & \textbf{1.00}/\textbf{1.00}  & 0.78/0.73  & 0.99/0.93  & 0.43/0.40    & 0.92/0.85   & 0.02/0.01     & 0.34/0.27     & 0.38/0.33 & 0.94/0.91   \\
                              & MP1                     & \textbf{1.00}/0.99   & \textbf{1.00}/\textbf{1.00}  & 0.94/0.80  & \textbf{1.00}/0.97  & 0.64/0.46    & 0.94/0.84   & 0.14/0.05     & 0.47/0.33     & 0.70/0.56 & 0.93/0.88   \\
                              & Naive Drifting          & \textbf{1.00}/0.99   & \textbf{1.00}/\textbf{1.00}  & 0.94/0.86  & 0.99/0.96  & 0.56/0.47    & \textbf{0.98}/0.89   & 0.15/0.11     & 0.61/0.55     & 0.46/0.37 & 0.93/0.88   \\
                              & IDP                     & \textbf{1.00}/0.99   & \textbf{1.00}/\textbf{1.00}  & \textbf{0.97}/\textbf{0.91}  & \textbf{1.00}/0.97  & 0.76/0.63    & 0.96/0.90   & 0.34/0.23     & 0.65/\textbf{0.57}     & \textbf{0.78}/\textbf{0.68} & \textbf{0.97}/\textbf{0.92}   \\ \midrule
\multirow{7}{*}{CNN}          & DP                      & \textbf{1.00}/\textbf{1.00}   & \textbf{1.00}/\textbf{1.00}  & \textbf{1.00}/0.95  & \textbf{1.00}/\textbf{0.99}  & 0.88/0.77    & 0.97/\textbf{0.93}   & \textbf{0.53}/0.43     & \textbf{0.81}/\textbf{0.71}     & \textbf{0.91}/\textbf{0.74} & \textbf{0.98}/0.96   \\
                              & Flow                    & \textbf{1.00}/0.99   & \textbf{1.00}/\textbf{1.00}  & \textbf{1.00}/\textbf{0.96}  & \textbf{1.00}/\textbf{0.99}  & 0.83/0.77    & 0.96/0.91   & 0.43/0.31     & 0.78/0.63     & 0.81/0.69 & \textbf{0.98}/0.96   \\
                              & MIP                     & \textbf{1.00}/\textbf{1.00}   & \textbf{1.00}/\textbf{1.00}  & 0.99/\textbf{0.96}  & \textbf{1.00}/0.98  & 0.83/0.69    & 0.94/0.84   & 0.48/\textbf{0.45}     & 0.73/0.62     & 0.63/0.49 & \textbf{0.98}/\textbf{0.97}   \\
                              & CP                      & \textbf{1.00}/0.98   & \textbf{1.00}/\textbf{1.00}  & 0.88/0.76  & \textbf{1.00}/0.98  & 0.56/0.49    & 0.84/0.80   & 0.01/0.00     & 0.29/0.22     & 0.49/0.33 & 0.94/0.91   \\
                              & MP1                     & \textbf{1.00}/\textbf{1.00}   & \textbf{1.00}/\textbf{1.00}  & 0.99/0.94  & \textbf{1.00}/0.98    & \textbf{0.89}/\textbf{0.78}   & \textbf{0.98}/0.91     & 0.33/0.22     & 0.57/0.44 & 0.81/0.70  &  0.97/0.96 \\
                              & Naive Drifting          & \textbf{1.00}/0.99   & \textbf{1.00}/\textbf{1.00}  & 0.99/0.95  & \textbf{1.00}/0.97  & 0.83/0.74    & 0.96/0.90   & 0.46/0.33     & 0.74/0.61     & 0.59/0.48 & 0.96/0.94   \\
                              & IDP                     & \textbf{1.00}/\textbf{1.00}   & \textbf{1.00}/\textbf{1.00}  & \textbf{1.00}/\textbf{0.96}  & \textbf{1.00}/0.98  & 0.88/0.72    & 0.94/0.86   & 0.48/0.36     & 0.78/0.60     & 0.84/0.53 & \textbf{0.98}/0.91    \\ \bottomrule                
    \end{tabular}}
\end{table}

\begin{table}[ht]
    \centering
    \caption{Detailed 2D simulation results on \textbf{image-based} robotic manipulation tasks. For each task, we report the best checkpoint performance / averaged performance over the last 5 checkpoints. Each experiment is run with 3 seeds, and we report the average performance across all seeds.}
    \label{tab:full_2d}
    \setlength{\tabcolsep}{4.5pt}
    \resizebox{0.95\textwidth}{!}{%
    \begin{tabular}{llcccccccc}
    \toprule
    \multirow{2.5}{*}{Architecture} & \multirow{2.5}{*}{Method} & \multicolumn{2}{c}{Lift} & \multicolumn{2}{c}{Can} & \multicolumn{2}{c}{Square} & \multirow{2.5}{*}{Tool-Hang} & \multirow{2.5}{*}{Push-T} \\ \cmidrule{3-8}
                                  &                         & mh          & ph         & mh         & ph         & mh           & ph          &            &            \\ \midrule
\multirow{7}{*}{Transformer}   & DP                      & \textbf{1.00}/0.98   & \textbf{1.00}/0.99  & \textbf{0.97}/\textbf{0.92}  & \textbf{0.99}/\textbf{0.96}  & \textbf{0.85}/0.74    & 0.97/0.89       & 0.35/0.17 & \textbf{0.93}/0.87  \\
                              & Flow                    & \textbf{1.00}/0.99   & \textbf{1.00}/0.99  & 0.92/0.85  & \textbf{0.99}/0.94  & 0.81/0.73    & 0.97/0.88   & 0.35/\textbf{0.26} & 0.90/0.84  \\
                              & MIP                     & \textbf{1.00}/0.97   & \textbf{1.00}/\textbf{1.00}  & 0.91/0.86  & \textbf{0.99}/0.94  & \textbf{0.85}/0.73    & \textbf{0.98}/0.86   & 0.33/0.14 & \textbf{0.93}/\textbf{0.89}  \\
                              & CP                      & \textbf{1.00}/0.98   & \textbf{1.00}/\textbf{1.00}  & 0.78/0.64  & 0.86/0.61  & 0.72/0.63    & 0.92/0.87     & 0.25/0.16 & 0.86/0.82   \\
                              & MP1                     & \textbf{1.00}/\textbf{1.00}   & \textbf{1.00}/0.99  & 0.91/0.82  & 0.79/0.69  & \textbf{0.85}/\textbf{0.75}    & 0.95/\textbf{0.91}   & 0.30/0.13 & 0.83/0.72  \\
                              & Naive Drifting          & \textbf{1.00}/0.99   & \textbf{1.00}/0.99  & 0.95/0.90  & 0.97/0.86  & \textbf{0.85}/0.73    & 0.94/0.89   & \textbf{0.37}/0.19 & 0.90/0.86  \\
                              & Ours                    & \textbf{1.00}/0.98   & \textbf{1.00}/\textbf{1.00}  & \textbf{0.97}/0.91  & \textbf{0.99}/0.94  & 0.81/0.74    & 0.96/0.89   & \textbf{0.37}/0.21 & 0.92/0.87  \\ \midrule

\multirow{7}{*}{CNN}          & DP                      & \textbf{1.00}/\textbf{0.99}   & \textbf{1.00}/0.99  & 0.82/0.73  & 0.98/0.92  & 0.89/0.78    & 0.97/\textbf{0.93}     & \textbf{0.61}/\textbf{0.48} & 0.92/\textbf{0.89}  \\
                              & Flow                    & \textbf{1.00}/0.98   & \textbf{1.00}/0.99  & \textbf{0.93}/0.81  & \textbf{0.99}/0.93  & \textbf{0.91}/\textbf{0.83}    & \textbf{0.99}/0.92   & 0.59/0.45 & \textbf{0.93}/0.88  \\
                              & MIP                     & \textbf{1.00}/0.96   & \textbf{1.00}/0.99  & 0.87/0.81  & 0.96/0.92  & 0.79/0.68    & 0.91/0.81   & 0.60/0.42 & 0.81/0.79  \\
                              & CP                      & \textbf{1.00}/0.98   & \textbf{1.00}/\textbf{1.00}  & 0.76/0.62  & 0.75/0.59  & 0.67/0.56    & 0.92/0.86   & 0.19/0.09 & 0.89/0.85 \\
                              & MP1                     & \textbf{1.00}/0.95   & \textbf{1.00}/0.99  & 0.83/0.72  & 0.74/0.66  & 0.86/0.80    & 0.97/0.88   & 0.55/0.40 & 0.90/0.86  \\
                              & Naive Drifting          & 0.99/0.94   & \textbf{1.00}/0.98  & 0.85/0.69  & 0.93/0.83  & 0.75/0.64    & 0.90/0.80   & 0.55/0.33 & 0.91/0.87  \\
                              & Ours                    & \textbf{1.00}/0.96   & \textbf{1.00}/0.99  & 0.92/\textbf{0.85}  & \textbf{0.99}/\textbf{0.94}  & 0.83/0.75    & 0.92/0.84   & 0.59/0.45 & 0.92/0.86  \\
    \bottomrule                       
    \end{tabular}}
\end{table}

\begin{table}[!htbp]
\centering
\caption{Detailed 3D simulation results on \textbf{Adroit, DexArt, and Meta-World.} Results are averaged over 3 random seeds and reported as mean $\pm$ standard deviation. The best result in each column is highlighted in \textbf{bold}.}
\label{tab:full_3d}

\begingroup
\scriptsize
\setlength{\tabcolsep}{1.6pt}
\renewcommand{\arraystretch}{0.93}
\renewcommand{\tabularxcolumn}[1]{m{#1}}

\begin{tabularx}{\textwidth}{@{}lc*{3}{>{\centering\arraybackslash}X}*{4}{>{\centering\arraybackslash}X}*{4}{>{\centering\arraybackslash}X}@{}}
\toprule
\multirow{3.5}{*}{\textbf{Methods}} & \multirow{3.5}{*}{\textbf{NFE}} & \multicolumn{3}{c}{\textbf{Adroit}} & \multicolumn{4}{c}{\textbf{DexArt}} & \multicolumn{4}{c}{\textbf{Meta-World (Easy)}} \\
\cmidrule(lr){3-5} \cmidrule(lr){6-9} \cmidrule(lr){10-13}
& 
& Hammer
& Door
& Pen
& Laptop
& Faucet
& Bucket
& Toilet
& \shortstack[c]{Coffee\\Button}
& \shortstack[c]{Dial\\Turn}
& \shortstack[c]{Door\\Close}
& \shortstack[c]{Door\\Lock} \\
\midrule
DP3 & 10 & \ddbf{100}{0} & \dd{58}{5} & \dd{53}{3} & \dd{83}{2} & \dd{40}{2} & \dd{25}{1} & \dd{74}{2} & \ddbf{100}{0} & \dd{91}{3} & \ddbf{100}{0} & \ddbf{100}{0} \\
Flow & 10 &\dd{99}{1} & \dd{69}{3} & \dd{45}{3} & \dd{85}{0} & \dd{42}{4} & \dd{26}{3} & \dd{74}{3} & \ddbf{100}{0} & \dd{81}{5} & \ddbf{100}{0} & \ddbf{100}{0} \\
CP & 1 & \dd{97}{2} & \dd{66}{0} & \dd{54}{2} & \dd{76}{11} & \dd{38}{0} & \dd{25}{5} & \dd{73}{4} & \dd{90}{14} & \dd{82}{2} & \dd{93}{9} & \dd{74}{34} \\
MP1 & 1 & \dd{99}{1} & \dd{48}{2} & \ddbf{58}{4} & \dd{82}{3} & \dd{46}{2} & \dd{26}{3} & \dd{69}{3} & \ddbf{100}{0} & \dd{91}{5} & \ddbf{100}{0} & \ddbf{100}{0} \\
FlowPolicy & 1 & \dd{98}{2} & \dd{66}{1} & \dd{55}{2} & \dd{86}{2} & \dd{40}{1} & \dd{28}{0} & \dd{76}{3} & \ddbf{100}{0} & \dd{81}{5} & \ddbf{100}{0} & \dd{99}{1} \\
Naive Drifting & 1 & \ddbf{100}{0} & \ddbf{67}{3} & \dd{43}{5} & \dd{95}{4} & \dd{46}{3} & \dd{29}{1} & \dd{82}{6} & \ddbf{100}{0} & \ddbf{97}{2} & \ddbf{100}{0} & \ddbf{100}{0} \\
Ours & 1 & \ddbf{100}{0} & \ddbf{67}{2} & \dd{52}{4} & \ddbf{96}{3} & \ddbf{48}{1} & \ddbf{30}{0} & \ddbf{83}{6} & \ddbf{100}{0} & \ddbf{97}{4} & \ddbf{100}{0} & \ddbf{100}{0} \\
\bottomrule
\end{tabularx}

\vspace{0.12em}

\begin{tabularx}{\textwidth}{@{}lc*{9}{>{\centering\arraybackslash}X}@{}}
\toprule
\multirow{3.5}{*}{\textbf{Methods}} & \multirow{3.5}{*}{\textbf{NFE}} & \multicolumn{9}{c}{\textbf{Meta-World (Easy)}} \\
\cmidrule(lr){3-11}
& 
& \shortstack[c]{Button\\Press}
& \shortstack[c]{ButtonPress\\Topdown}
& \shortstack[c]{ButtonPress\\Topdown Wall}
& \shortstack[c]{ButtonPress\\Wall}
& \shortstack[c]{Door\\Open}
& \shortstack[c]{Door\\Unlock}
& \shortstack[c]{Drawer\\Close}
& \shortstack[c]{Drawer\\Open}
& \shortstack[c]{Faucet\\Close} \\
\midrule
DP3 & 10 & \ddbf{100}{0} & \ddbf{100}{0} & \ddbf{100}{0} & \ddbf{100}{0} & \ddbf{100}{0} & \ddbf{100}{0} & \ddbf{100}{0} & \ddbf{100}{0} & \ddbf{100}{0}  \\
Flow & 10 & \ddbf{100}{0} & \ddbf{100}{0} & \ddbf{100}{0} & \ddbf{100}{0} & \ddbf{100}{0} & \ddbf{100}{0} & \ddbf{100}{0} & \ddbf{100}{0} & \ddbf{100}{0}  \\
CP & 1 & \dd{95}{8} & \dd{70}{42} & \dd{71}{41} & \ddbf{100}{0} & \dd{85}{22} & \ddbf{100}{0} & \ddbf{100}{0} & \dd{92}{12} & \ddbf{100}{0} \\
MP1 & 1 & \ddbf{100}{0} & \ddbf{100}{0} & \ddbf{100}{0} & \ddbf{100}{0} & \ddbf{100}{0} & \ddbf{100}{0} & \ddbf{100}{0} & \ddbf{100}{0} & \ddbf{100}{0} \\
FlowPolicy & 1 & \ddbf{100}{0} & \ddbf{100}{0} & \ddbf{100}{0} & \ddbf{100}{0} & \ddbf{100}{0} & \ddbf{100}{0} & \ddbf{100}{0} & \ddbf{100}{0} & \ddbf{100}{0}  \\
Naive Drifting & 1 & \ddbf{100}{0} & \ddbf{100}{0} & \ddbf{100}{0} & \ddbf{100}{0} & \ddbf{100}{0} & \ddbf{100}{0} & \ddbf{100}{0} & \ddbf{100}{0} & \ddbf{100}{0}  \\
Ours & 1 & \ddbf{100}{0} & \ddbf{100}{0} & \ddbf{100}{0} & \ddbf{100}{0} & \ddbf{100}{0} & \ddbf{100}{0} & \ddbf{100}{0} & \ddbf{100}{0} & \ddbf{100}{0}  \\
\bottomrule
\end{tabularx}

\vspace{0.12em}

\begin{tabularx}{\textwidth}{@{}lc*{10}{>{\centering\arraybackslash}X}@{}}
\toprule
\multirow{3.5}{*}{\textbf{Methods}} & \multirow{3.5}{*}{\textbf{NFE}} & \multicolumn{10}{c}{\textbf{Meta-World (Easy)}} \\
\cmidrule(lr){3-12}
& 
& \shortstack[c]{Faucet\\Open}
& \shortstack[c]{Handle\\Press}
& \shortstack[c]{Handle\\Pull}
& \shortstack[c]{Handle\\Press Side}
& \shortstack[c]{Handle\\Pull Side}
& \shortstack[c]{Lever\\Pull}
& \shortstack[c]{Plate\\Slide}
& \shortstack[c]{Plate Slide\\Back}
& \shortstack[c]{Plate Slide\\Back Side}
& \shortstack[c]{Plate Slide\\Side} \\
\midrule
DP3 & 10 & \ddbf{100}{0} & \dd{98}{1} & \dd{59}{11} & \ddbf{100}{0} & \dd{66}{5} & \dd{85}{2} & \ddbf{100}{0} & \ddbf{100}{0} & \ddbf{100}{0} & \ddbf{100}{0} \\
Flow & 10 & \ddbf{100}{0} & \ddbf{100}{0} & \dd{72}{8} & \ddbf{100}{0} & \ddbf{72}{2} & \dd{85}{1} & \ddbf{100}{0} & \ddbf{100}{0} & \ddbf{100}{0} & \ddbf{100}{0} \\
CP & 1 & \dd{85}{21} & \dd{91}{10} & \dd{66}{0} & \dd{88}{16} & \dd{17}{0} & \dd{65}{25} & \dd{74}{37} & \dd{93}{9} & \dd{92}{11} & \dd{99}{1} \\
MP1 & 1 & \ddbf{100}{0} & \dd{98}{3} & \dd{60}{9} & \ddbf{100}{0} & \dd{49}{6} & \dd{84}{4} & \ddbf{100}{0} & \ddbf{100}{0} & \ddbf{100}{0} & \ddbf{100}{0} \\
FlowPolicy & 1 & \ddbf{100}{0} & \dd{97}{3} & \dd{64}{1} & \ddbf{100}{0} & \dd{41}{0} & \dd{79}{4} & \ddbf{100}{0} & \ddbf{100}{0} & \ddbf{100}{0} & \ddbf{100}{0} \\
Naive Drifting & 1 & \ddbf{100}{0} & \dd{98}{2} & \dd{68}{16} & \ddbf{100}{0} & \dd{65}{7} & \dd{88}{9} & \ddbf{100}{0} & \ddbf{100}{0} & \ddbf{100}{0} & \ddbf{100}{0} \\
Ours & 1 & \ddbf{100}{0} & \ddbf{100}{0} & \ddbf{80}{1} & \ddbf{100}{0} & \dd{67}{4} & \ddbf{93}{2} & \ddbf{100}{0} & \ddbf{100}{0} & \ddbf{100}{0} & \ddbf{100}{0} \\
\bottomrule
\end{tabularx}

\vspace{0.12em}

\begin{tabularx}{\textwidth}{@{}lc*{5}{>{\centering\arraybackslash}X}*{4}{>{\centering\arraybackslash}X}@{}}
\toprule
\multirow{3.5}{*}{\textbf{Methods}} & \multirow{3.5}{*}{\textbf{NFE}} & \multicolumn{5}{c}{\textbf{Meta-World (Easy)}} & \multicolumn{4}{c}{\textbf{Meta-World (Medium)}} \\
\cmidrule(lr){3-7} \cmidrule(lr){8-11}
& 
& \shortstack[c]{Reach}
& \shortstack[c]{Reach\\Wall}
& \shortstack[c]{Window\\Close}
& \shortstack[c]{Window\\Open}
& \shortstack[c]{Peg Unplug\\Side}
& \shortstack[c]{Basketball}
& \shortstack[c]{Bin\\Picking}
& \shortstack[c]{Box\\Close}
& \shortstack[c]{Coffee\\Pull} \\
\midrule
DP3 & 10 & \dd{25}{2} & \dd{67}{1} & \ddbf{100}{0} & \ddbf{100}{0} & \dd{72}{1} & \dd{100}{0} & \dd{48}{18} & \dd{69}{5} & \dd{88}{5} \\
Flow & 10 & \dd{28}{1} & \dd{73}{2} & \ddbf{100}{0} & \ddbf{100}{0} & \dd{71}{1} & \ddbf{100}{0} & \dd{37}{13} & \dd{67}{1} & \dd{88}{2} \\
CP & 1 & \dd{27}{1} & \dd{56}{16} & \dd{96}{6} & \dd{75}{33} & \dd{57}{23} & \dd{96}{0} & \ddbf{48}{0} & \dd{60}{18} & \dd{72}{22} \\
MP1 & 1 & \dd{28}{1} & \ddbf{76}{2} & \ddbf{100}{0} & \ddbf{100}{0} & \dd{71}{3} & \ddbf{100}{0} & \dd{44}{15} & \dd{69}{10} & \ddbf{91}{2} \\
FlowPolicy & 1 & \dd{30}{1} & \dd{70}{2} & \ddbf{100}{0} & \ddbf{99}{0} & \dd{71}{6} & \dd{86}{8} & \dd{35}{11} & \ddbf{74}{2} & \dd{88}{1} \\
Naive Drifting & 1 & \ddbf{48}{2} & \dd{70}{4} & \ddbf{100}{0} & \ddbf{100}{0} & \dd{62}{9} & \dd{100}{0} & \dd{35}{21} & \dd{53}{10} & \dd{85}{4} \\
Ours & 1 & \dd{30}{3} & \dd{67}{6} & \ddbf{100}{0} & \ddbf{100}{0} & \ddbf{77}{3} & \ddbf{100}{0} & \dd{40}{6} & \dd{70}{4} & \dd{85}{1} \\
\bottomrule
\end{tabularx}

\vspace{0.12em}

\begin{tabularx}{\textwidth}{@{}lc*{7}{>{\centering\arraybackslash}X}*{2}{>{\centering\arraybackslash}X}@{}}
\toprule
\multirow{3.5}{*}{\textbf{Methods}} & \multirow{3.5}{*}{\textbf{NFE}} & \multicolumn{7}{c}{\textbf{Meta-World (Medium)}} & \multicolumn{2}{c}{\textbf{Meta-World (Hard)}} \\
\cmidrule(lr){3-9} \cmidrule(lr){10-11}
& 
& \shortstack[c]{Coffee\\Push}
& \shortstack[c]{Hammer}
& \shortstack[c]{Peg Insert\\Side}
& \shortstack[c]{Push\\Wall}
& \shortstack[c]{Soccer}
& \shortstack[c]{Sweep}
& \shortstack[c]{Sweep\\Into}
& \shortstack[c]{Hand\\Insert}
& \shortstack[c]{Pick Out\\of Hole} \\
\midrule
DP3 & 10 & \dd{89}{3} & \dd{87}{3} & \dd{83}{4} & \dd{88}{7} & \dd{26}{2} & \dd{95}{5} & \dd{46}{14} & \dd{15}{2} & \ddbf{51}{0} \\
Flow & 10 & \dd{89}{2} & \dd{88}{4} & \ddbf{87}{7} & \dd{90}{2} & \ddbf{31}{5} & \dd{96}{1} & \ddbf{48}{18} & \dd{17}{2} & \dd{46}{6} \\
CP & 1 & \dd{86}{14} & \dd{71}{25} & \dd{72}{0} & \dd{47}{0} & \dd{20}{8} & \dd{58}{39} & \dd{23}{8} & \dd{17}{4} & \dd{30}{0} \\
MP1 & 1 & \dd{92}{1} & \ddbf{90}{5} & \dd{86}{8} & \dd{89}{3} & \dd{30}{2} & \dd{95}{3} & \dd{43}{13} & \dd{16}{2} & \dd{38}{12} \\
FlowPolicy & 1 & \ddbf{93}{5} & \dd{85}{4} & \dd{75}{6} & \dd{51}{3} & \dd{29}{2} & \dd{84}{5} & \dd{37}{21} & \ddbf{20}{1} & \dd{22}{10} \\
Naive Drifting & 1 & \dd{92}{2} & \dd{84}{3} & \dd{85}{4} & \dd{93}{6} & \dd{33}{8} & \dd{93}{5} & \dd{25}{8} & \dd{12}{2} & \dd{40}{11} \\
Ours & 1 & \dd{92}{3} & \ddbf{90}{4} & \dd{79}{5} & \ddbf{95}{2} & \dd{27}{1} & \ddbf{100}{0} & \dd{32}{17} & \ddbf{20}{6} & \dd{44}{4} \\
\bottomrule
\end{tabularx}

\vspace{0.12em}

\begin{tabularx}{\textwidth}{@{}lc*{3}{>{\centering\arraybackslash}X}*{5}{>{\centering\arraybackslash}X}>{\centering\arraybackslash}X@{}}
\toprule
\multirow{3.5}{*}{\textbf{Methods}} & \multirow{3.5}{*}{\textbf{NFE}} & \multicolumn{3}{c}{\textbf{Meta-World (Hard)}} & \multicolumn{5}{c}{\textbf{Meta-World (Very Hard)}} & \multirow{3.5}{*}{\textbf{Avg.}} \\
\cmidrule(lr){3-5} \cmidrule(lr){6-10}
& 
& \shortstack[c]{Assembly}
& \shortstack[c]{Pick\\Place}
& \shortstack[c]{Push}
& \shortstack[c]{Shelf\\Place}
& \shortstack[c]{Disassemble}
& \shortstack[c]{Stick\\Pull}
& \shortstack[c]{Stick\\Push}
& \shortstack[c]{Pick Place\\Wall}
& \\
\midrule
DP3 & 10 & \dd{94}{1} & \dd{54}{3} & \dd{57}{3} & \dd{41}{15} & \dd{78}{3} & \dd{68}{1} & \ddbf{100}{0} & \dd{72}{4} & \dd{79.4}{2.4} \\
Flow & 10 & \dd{94}{1} & \dd{61}{2} & \dd{58}{4} & \dd{41}{9} & \dd{74}{1} & \dd{67}{3} & \ddbf{100}{0} & \dd{72}{2} & \dd{79.9}{2.2} \\
CP & 1 & \dd{74}{25} & \dd{51}{19} & \dd{35}{24} & \dd{36}{4} & \dd{60}{2} & \dd{48}{23} & \dd{99}{1} & \dd{67}{11} & \dd{68.4}{11.9} \\
MP1 & 1 & \dd{96}{1} & \ddbf{65}{1} & \ddbf{73}{7} & \dd{44}{12} & \dd{70}{3} & \ddbf{72}{4} & \ddbf{100}{0} & \ddbf{78}{2} & \dd{79.7}{2.8} \\
FlowPolicy & 1 & \dd{96}{3} & \dd{60}{4} & \dd{58}{6} & \dd{39}{6} & \dd{63}{4} & \dd{64}{1} & \ddbf{100}{0} & \dd{69}{4} & \dd{76.9}{2.5} \\
Naive Drifting & 1 & \ddbf{97}{2} & \ddbf{65}{8} & \dd{57}{6} & \ddbf{45}{4} & \dd{90}{4} & \dd{58}{12} & \ddbf{100}{0} & \dd{73}{6} & \dd{79.8}{3.5} \\
Ours & 1 & \dd{96}{3} & \dd{57}{4} & \dd{65}{4} & \dd{43}{2} & \ddbf{91}{1} & \dd{69}{2} & \ddbf{100}{0} & \dd{68}{2} & \ddbf{81.3}{2.0} \\
\bottomrule
\end{tabularx}

\endgroup
\end{table}


\begin{table}[!htbp]
\centering
\caption{
\textbf{Success rates of experts on 3D simulation tasks.} We evaluate 200 episodes for each task. Results are averaged over 3 random seeds and reported as mean $\pm$ standard deviation.}
\label{tab:full_3d_expert}

\begingroup
\scriptsize
\setlength{\tabcolsep}{1.6pt}
\renewcommand{\arraystretch}{0.93}
\renewcommand{\tabularxcolumn}[1]{m{#1}}

\begin{tabularx}{\textwidth}{@{}l*{3}{>{\centering\arraybackslash}X}*{4}{>{\centering\arraybackslash}X}*{4}{>{\centering\arraybackslash}X}@{}}
\toprule
\multirow{3.5}{*}{\textbf{Methods}} & \multicolumn{3}{c}{\textbf{Adroit}}  & \multicolumn{4}{c}{\textbf{DexArt}} & \multicolumn{4}{c}{\textbf{Meta-World (Easy)}} \\
\cmidrule(lr){2-4} \cmidrule(lr){5-8} \cmidrule(lr){9-12}
& Hammer
& Door
& Pen
& Laptop
& Faucet
& Bucket
& Toilet
& \shortstack[c]{Coffee\\Button}
& \shortstack[c]{Dial\\Turn}
& \shortstack[c]{Door\\Close}
& \shortstack[c]{Door\\Lock} \\
\midrule
Expert & \dd{100}{0} & \dd{97}{1} & \dd{96}{1} & \dd{91}{1} & \dd{54}{4} & \dd{67}{2} & \dd{76}{1} & \dd{100}{0} & \dd{100}{0} & \dd{100}{0} & \dd{100}{0} \\
\bottomrule
\end{tabularx}

\vspace{0.12em}

\begin{tabularx}{\textwidth}{@{}l*{9}{>{\centering\arraybackslash}X}@{}}
\toprule
\multirow{3.5}{*}{\textbf{Methods}} & \multicolumn{9}{c}{\textbf{Meta-World (Easy)}} \\
\cmidrule(lr){2-10}
& \shortstack[c]{Button\\Press}
& \shortstack[c]{ButtonPress\\Topdown}
& \shortstack[c]{ButtonPress\\Topdown Wall}
& \shortstack[c]{ButtonPress\\Wall}
& \shortstack[c]{Door\\Open}
& \shortstack[c]{Door\\Unlock}
& \shortstack[c]{Drawer\\Close}
& \shortstack[c]{Drawer\\Open}
& \shortstack[c]{Faucet\\Close} \\
\midrule
Expert & \dd{100}{0} & \dd{100}{0} & \dd{100}{0} & \dd{97}{2} & \dd{97}{2} & \dd{100}{0} & \dd{100}{0} & \dd{100}{0} & \dd{100}{0} \\
\bottomrule
\end{tabularx}

\vspace{0.12em}

\begin{tabularx}{\textwidth}{@{}l*{10}{>{\centering\arraybackslash}X}@{}}
\toprule
\multirow{3.5}{*}{\textbf{Methods}} & \multicolumn{10}{c}{\textbf{Meta-World (Easy)}} \\
\cmidrule(lr){2-11}
& \shortstack[c]{Faucet\\Open}
& \shortstack[c]{Handle\\Press}
& \shortstack[c]{Handle\\Pull}
& \shortstack[c]{Handle\\Press Side}
& \shortstack[c]{Handle\\Pull Side}
& \shortstack[c]{Lever\\Pull}
& \shortstack[c]{Plate\\Slide}
& \shortstack[c]{Plate Slide\\Back}
& \shortstack[c]{Plate Slide\\Back Side}
& \shortstack[c]{Plate Slide\\Side} \\
\midrule
Expert & \dd{100}{0} & \dd{100}{0} & \dd{100}{0} & \dd{100}{0} & \dd{100}{0} & \dd{98}{0} & \dd{100}{0} & \dd{100}{0} & \dd{100}{0} & \dd{100}{0} \\
\bottomrule
\end{tabularx}

\vspace{0.12em}

\begin{tabularx}{\textwidth}{@{}l*{5}{>{\centering\arraybackslash}X}*{4}{>{\centering\arraybackslash}X}@{}}
\toprule
\multirow{3.5}{*}{\textbf{Methods}} & \multicolumn{5}{c}{\textbf{Meta-World (Easy)}} & \multicolumn{4}{c}{\textbf{Meta-World (Medium)}} \\
\cmidrule(lr){2-6} \cmidrule(lr){7-10}
& \shortstack[c]{Reach}
& \shortstack[c]{Reach\\Wall}
& \shortstack[c]{Window\\Close}
& \shortstack[c]{Window\\Open}
& \shortstack[c]{Peg Unplug\\Side}
& \shortstack[c]{Basketball}
& \shortstack[c]{Bin\\Picking}
& \shortstack[c]{Box\\Close}
& \shortstack[c]{Coffee\\Pull} \\
\midrule
Expert & \dd{100}{0} & \dd{100}{0} & \dd{100}{0} & \dd{100}{0} & \dd{99}{1} & \dd{100}{0} & \dd{96}{1} & \dd{88}{2} & \dd{100}{0} \\
\bottomrule
\end{tabularx}

\vspace{0.12em}

\begin{tabularx}{\textwidth}{@{}l*{7}{>{\centering\arraybackslash}X}*{2}{>{\centering\arraybackslash}X}@{}}
\toprule
\multirow{3.5}{*}{\textbf{Methods}} & \multicolumn{7}{c}{\textbf{Meta-World (Medium)}} & \multicolumn{2}{c}{\textbf{Meta-World (Hard)}}\multirow{2}{*}{\textbf{Avg.}} \\
\cmidrule(lr){2-8} \cmidrule(lr){9-10}
& \shortstack[c]{Coffee\\Push}
& \shortstack[c]{Hammer}
& \shortstack[c]{Peg Insert\\Side}
& \shortstack[c]{Push\\Wall}
& \shortstack[c]{Soccer}
& \shortstack[c]{Sweep}
& \shortstack[c]{Sweep\\Into}
& \shortstack[c]{Hand\\Insert}
& \shortstack[c]{Pick Out\\of Hole} \\
\midrule
Expert & \dd{100}{0} & \dd{100}{0} & \dd{91}{2} & \dd{100}{0} & \dd{91}{2} & \dd{100}{0} & \dd{91}{1} & \dd{100}{0} & \dd{100}{0} \\
\bottomrule
\end{tabularx}

\vspace{0.12em}

\begin{tabularx}{\textwidth}{@{}l*{3}{>{\centering\arraybackslash}X}*{5}{>{\centering\arraybackslash}X}>{\centering\arraybackslash}X@{}}
\toprule
\multirow{3.5}{*}{\textbf{Methods}} & \multicolumn{3}{c}{\textbf{Meta-World (Hard)}} & \multicolumn{5}{c}{\textbf{Meta-World (Very Hard)}} & \multirow{2}{*}{\textbf{Avg.}} \\
\cmidrule(lr){2-4} \cmidrule(lr){5-9}
& \shortstack[c]{Assembly}
& \shortstack[c]{Pick\\Place}
& \shortstack[c]{Push}
& \shortstack[c]{Shelf\\Place}
& \shortstack[c]{Disassemble}
& \shortstack[c]{Stick\\Pull}
& \shortstack[c]{Stick\\Push}
& \shortstack[c]{Pick Place\\Wall}
& \\
\midrule
Expert & \dd{100}{0} & \dd{100}{0} & \dd{99}{0} & \dd{99}{0} & \dd{95}{1} & \dd{95}{1} & \dd{100}{0} & \dd{100}{0} & \dd{98.5}{0.5} \\
\bottomrule
\end{tabularx}

\endgroup
\end{table}

While IDP improves over explicit drifting baselines and matches strong one-step policies in our experiments, our evaluation also reveals several limitations that we believe deserve explicit discussion. We organize them by the experimental observation that motivated each.

\paragraph{Mode collapse persists under bimodal demonstrations.}
On the real-world ``Pick Peach'' task we deliberately constructed a strictly bimodal demonstration distribution ($50\%$ left placement, $50\%$ right). All evaluated policies: IDP ($\mathrm{NFE}=1$), MIP ($\mathrm{NFE}=2$), Naive Drifting ($\mathrm{NFE}=1$), and DP ($\mathrm{NFE}=100$) collapsed to a single mode, mastering one placement while ignoring the other. While IDP preserves the geometric precision of the learned mode (achieving $50\%$ vision-based success while all baselines fail at $0\%$, see Table~\ref{tab:real_robot}), it does not solve the mode-coverage problem inherent to compressing a multi-modal distribution into a single forward pass without iterative stochastic injection. Restoring multi-modality without sacrificing one-step inference remains an open problem orthogonal to the geometric correction we propose.

\paragraph{Conditional expert geometry requires sufficient local data density.}
Constructing $G_i$ in Eq.~\eqref{eq:cond_expert_geometry} implicitly assumes that, for a given minibatch, there exist demonstrations $(o_j, a_j^*)$ with $h_j \approx h_i$, so the softmax weights $w_{ij}$ concentrate on truly relevant neighbors. In low-density regions of the state space---rare configurations or under-explored corners of the demonstration distribution---the soft neighborhood degenerates toward a uniform distribution, $G_i$ approaches the global covariance $\Sigma_{\mathrm{ref}}$, and the geometry excess $M_i$ becomes nearly zero, in which case IDP's training signal effectively reduces to that of Naive Drifting. 
While our evaluation suite does not contain extremely low-density cases, deployment in such regimes would likely amplify this degeneration.

\paragraph{Diagonal $M_i$ ignores cross-coordinate coupling.}
For computational tractability, we instantiate $M_i$ as a diagonal matrix (Sec.~4.3 of the main text and App.~\ref{app:diag_form}), which assumes that the geometric constraints of the expert action distribution decompose along the canonical action axes. For action spaces with strong cross-axis coupling---e.g., dexterous manipulation with $26$-DoF hands where finger joints jointly determine grasp shape---a full-matrix $M_i$ may capture richer constraints. Implementing the full multivariate excess at the cost of an $\mathcal{O}(d_a^3)$ generalized eigenvalue decomposition per sample remains future work.

\paragraph{Degradation under purely proprioceptive observations.}
The Conditional Expert Geometry relies on the observation embedding $\phi_\theta(o)$ to identify task-similar contexts. When the observation is purely proprioceptive (e.g., \texttt{qpos}-only state with no visual feedback), the similarity metric reflects only joint-angle proximity, which is a poor proxy for task context similarity. We observe this on the real-world state-based setting (Table~\ref{tab:real_robot}), where IDP-\texttt{qpos} achieves only $30\%$ success---below MIP-\texttt{qpos} ($40\%$). This suggests that when the observation modality does not supply enough signal to discriminate task contexts, the geometric correction signal cannot localize task constraints reliably; IDP therefore benefits most when the observation modality is visually grounded.

\paragraph{Per-batch reference geometry depends on batch composition.}
We estimate $\Sigma_{\mathrm{ref}}$ from the current minibatch rather than maintaining a dataset-level running estimate, which keeps the implementation simple but introduces variance in the reference scale: small batches with non-representative action distributions can produce noisy $v_d^{\mathrm{ref}}$ and consequently unstable excess values $m_{i,d}$. EMA-based or dataset-precomputed reference geometries are alternatives that we have not extensively evaluated; we use the per-batch estimator as the simpler default.

\paragraph{Saturated tasks see minimal benefit.}
On tasks whose successful action regions are wide and isotropic, most prominently PushT-State, the conditional excess $m_{i,d}$ is small across coordinates ($\kappa \approx 1.2$--$1.7$ in our covariance visualization), and the IDP objective approaches isotropic regression. The ablation in Sec.~5.3 of the main text confirms that IDP performs essentially identically to weaker variants on PushT, indicating that the benefit of geometry-induced correction is concentrated on tasks with narrow valid action regions (Tool-Hang). This is by design but limits the headline gain when the benchmark mix is dominated by saturated tasks.

\paragraph{Reference geometry's task-marginal assumption.}
Estimating one global $\Sigma_{\mathrm{ref}}$ implicitly treats action statistics as task-marginal. When demonstrations from substantially different tasks are mixed (e.g., dexterous manipulation pooled with parallel-gripper tasks during multi-task training), the global reference may flatten task-specific anisotropies, leading to suboptimal excess estimates for individual tasks. Task-conditioned reference geometries or hierarchical reference schemes are a natural extension that we leave for future work.



\end{document}